\documentclass{article}

\usepackage{arxiv}
\usepackage{natbib}

\usepackage{amsmath,amssymb,amsthm}

\usepackage{booktabs,graphicx,subcaption}

\usepackage{xcolor}
\usepackage{microtype}
\usepackage{enumitem}

\usepackage{tikz}
\usetikzlibrary{positioning,arrows.meta,calc,
                decorations.pathmorphing,shapes.geometric}

\usepackage{hyperref}
\definecolor{darkblue}{RGB}{0,60,120}
\hypersetup{colorlinks=true,linkcolor=darkblue,
            citecolor=darkblue,urlcolor=darkblue}

\theoremstyle{plain}
\newtheorem{theorem}{Theorem}
\newtheorem{proposition}[theorem]{Proposition}

\theoremstyle{definition}

\theoremstyle{remark}
\newtheorem{remark}[theorem]{Remark}

\newcommand{\tguomargin}[1]{}

\newcommand{\norm}[1]{\left\|#1\right\|}
\newcommand{\dosc}{{d_{\mathrm{osc}}}}

\newcommand{\kstar}{z_i^*}
\newcommand{\Wij}{w_{ij}}
\newcommand{\qj}{r_j}
\newcommand{\aij}{a_{ij}}

\renewcommand{\undertitle}{Published in Transactions on Machine Learning Research (2026)}
\renewcommand{\headeright}{TMLR 2026}

\title{Attention by Synchronization in Coupled Oscillator Networks%
  \thanks{OpenReview forum: \url{https://openreview.net/forum?id=T9g7VleXf2}}}

\author{%
  Fabio Pasqualetti \\
  Electrical Engineering and Computer Science\\
  University of California, Irvine\\
  \texttt{fabiopas@uci.edu}
  \and
  Taosha Guo \\
  Electrical Engineering and Computer Science\\
  University of California, Irvine\\
  \texttt{taoshag@uci.edu}
}

\usepackage{todonotes}

\begin{document}
\maketitle

\begin{abstract}
  We address transformer attention on energy-constrained physical substrates.
  Softmax attention requires exponentiation and global reduction, operations
  with high energy cost on von Neumann hardware and no natural physical analog.
  We show that Kuramoto synchronization dynamics (which arise in electrical,
  mechanical, superconducting, and charge-density-wave oscillator arrays, among
  other physical systems) implement a well-defined attention operation. The
  resulting mechanism, \emph{fixed-query oscillator attention}, replaces
  softmax's arithmetic with the equilibration of a gradient flow on the sphere:
  queries are learned anchors fixed on the sphere, and free oscillators evolve
  under Kuramoto--Lohe dynamics until they settle at positions encoding
  attention weights via cosine similarity. Because the computation is
  equilibration, no global exponential normalization is needed. The fixed point
  is provably unique and globally attractive from almost every initial
  condition, a guarantee that holds across every physical
  realization. Empirically, at the minimal hardware configuration (oscillator
  dimension $\dosc{=}2$), oscillator attention matches softmax on keyword
  spotting, and on subject-verb agreement it trains more reliably while reaching
  softmax's accuracy. Softmax retains an advantage on causal language modeling,
  but the gap decays as a power law in $\dosc$.  The main objective of this work
  is not to replace softmax in software but to provide a mathematically grounded
  blueprint for accurate attention on physical substrates.
\end{abstract}

\section{Introduction}
The attention mechanism is the computational core of modern
transformers~\citep{AV-NS-NP-JU-LJ-AG-LK-IP:17}, but its energy cost is
unavoidable on von Neumann hardware. Computing all-pairs query-key similarities
and applying a global exponential normalization scales quadratically with
sequence length, dominating inference cost on always-on edge devices such as
wearables, embedded sensors, and autonomous systems. The sustained memory
traffic exceeds the energy budgets of energy-harvested
hardware~\citep{YT-MD-DB-DM:22}, so the limit on transformer-grade inference
at the edge is power, not model quality.

This paper does not aim to replace softmax in software, where it is
well-optimized, understood, and hard to improve upon. Our goal is to
design attention mechanisms that physical systems can implement natively,
without approximating a digital algorithm in analog hardware. This requires
starting from what physics does naturally and asking whether that computation
can serve as attention. Attention is fundamentally a \emph{consensus
operation}: each token settles to a distribution over its neighbors reflecting
pairwise similarity, amplifying close neighbors and attenuating distant ones.
Consensus is also what physical oscillator networks, as well as other
multi-agent systems, compute when they equilibrate. The
Kuramoto model, introduced in 1975 to describe synchronization in populations of
coupled nonlinear oscillators~\citep{YK:75}, has since been shown to govern the
dynamics of remarkably diverse physical systems: electrical circuits, mechanical
pendulums, superconducting Josephson junction arrays~\citep{KW-PC-SHS:98}, and
charge-density-wave oscillator arrays \citep{JOB-TG-FP-AAB:25}. In each, coupled
units find a shared phase configuration that balances their pairwise
interactions. The energy cost of this computation is determined by the
substrate, not by floating-point arithmetic. If attention is consensus, and
physical oscillator networks compute consensus, then physical oscillator
networks can, in principle, compute attention.

We develop this \emph{physical intelligence} perspective (the view that a useful
computation can be a property of a dynamical class rather than a specific
device) by introducing \emph{fixed-query oscillator attention}, grounded in the
Lohe model~\citep{MAL:09}, a high-dimensional generalization of the Kuramoto
equations for oscillators on a unit sphere. The mechanism uses two kinds of
oscillators in different roles, generalizing the query-key structure of softmax
attention: anchors are fixed reference points on the sphere (the queries),
learned during training, while free oscillators evolve under input-dependent
coupling weights derived from key-like projections, settling toward whichever
anchors their couplings favor. At equilibrium, cosine similarities between
settled oscillators and anchors yield row-stochastic attention weights via
affine normalization, without a global exponential normalization. The
analysis is substrate-independent: wherever Kuramoto dynamics arise naturally,
they can compute attention. This universality extends to neuroscience:
Kuramoto dynamics are a canonical model of neural
oscillations~\citep{MB-SH-AD:10}, and the fixed-query structure (reference
phases anchoring the network while input-driven oscillators synchronize to them)
resembles theoretical accounts of how cortical areas use sustained oscillatory
templates for selective attention~\citep{WS-CMG:95, AKE-PF-WS:01}.  We do not
claim a direct biological model, but if neural oscillations compute something
like attention, the mechanism described here may point toward biologically
plausible sequence modeling~architectures.

This paper makes five contributions. First, we introduce fixed-query oscillator
attention as a blueprint for physically realizable attention: we establish its
closed-form fixed point for efficient training, prove that every design choice
is dictated by a physical constraint of the substrate, and identify the
dynamical class (coupled oscillators with attractive interactions on the sphere)
whose physical realizations can implement it. Second, we prove that when the
weighted anchor sum is nonzero, the dynamics have exactly two equilibria (one
stable and one unstable), and every trajectory except those starting at the
unstable equilibrium converges to the stable one. We characterize the
probability of slow convergence for trajectories starting near the unstable
equilibrium, and show that with positive coupling weights, the weighted anchor
sum vanishes with probability that decays exponentially in the oscillator
dimension. Third, we demonstrate that the physical mechanism is a viable
attention substrate: at the minimal hardware configuration (oscillator
  dimension $\dosc{=}2$, one oscillator per token), oscillator attention matches
  softmax on keyword spotting under matched positional information, and on
  subject-verb agreement at a constrained-capacity configuration with 9
    free oscillators it fails to train significantly less often ($3/50$ seeds
  versus $12/50$, $p = 0.023$). On causal language modeling the gap
  closes monotonically as the oscillator dimension grows from $2$ to $32$,
  consistent with a power law, and the fitted relation predicts a held-out
  $\dosc{=}64$ configuration on both datasets. Fourth, ablations with a
  frozen random value projection show that a learned value projection is
  unnecessary on both bidirectional tasks: on keyword spotting the frozen-$W_V$
  oscillator stays within $0.5$ pp of the full model, and on subject-verb
  agreement freezing $W_V$ leaves accuracy statistically unchanged
    ($-0.80$~pp, not significant). These ablations, however, do not by
  themselves isolate the dynamics as the source of accuracy, since the remaining
  pathways stay trainable. Fifth, we propose that the language modeling gap
arises from a dimensional bottleneck: the oscillator attention pattern is
constrained to a low-dimensional manifold whose size grows with the oscillator
dimension, while softmax operates with the full feature dimension throughout. At
low oscillator dimension, this constraint discards directions of similarity that
softmax can use; higher oscillator dimension progressively recovers them. We
support this view with the observed scaling behavior, with the gap following an
approximate $\dosc^{-1/2}$ law in the oscillator dimension, providing a
practical oscillator-budget design rule for target tasks. Separately, we show
that readout sharpening provides a software-side optimization that improves
accuracy at fixed $\dosc$ (up to $+1.23$~pp on KWS, up to $0.80$~PPL on causal
language modeling).

\section{Fixed-query oscillator attention}\label{sec:mechanism}
We introduce fixed-query oscillator attention as a drop-in replacement for the
softmax attention module in a standard transformer. The mechanism is
substrate-independent: the same equations describe its execution on a mechanical
oscillator network, an electrical oscillator circuit, a Josephson junction
array, or, for training, in software via the closed-form fixed point. The
starting point is the Lohe model~\citep{MAL:09}, which describes the dynamics of
$n$ oscillators $x_1,\ldots,x_n$ on the unit sphere
$\mathbb{S}^{\dosc-1} \subset \mathbb{R}^{\dosc}$ evolving under pairwise
coupling:
\begin{equation}\label{eq:lohe}
  \dot x_i = \Omega_i x_i + \left(I - x_i x_i^\top\right)
  \sum_{j=1}^n w_{ij}\, x_j,
\end{equation}
where $\dosc \geq 2$ is the oscillator dimension,
$\Omega_i \in \mathbb{R}^{\dosc \times \dosc}$ is a skew-symmetric matrix
encoding the natural frequency of oscillator $i$, $w_{ij} \geq 0$ is the
coupling weight between oscillators $i$ and $j$, and the projection
$(I - x_i x_i^\top)$ ensures that the trajectories evolve on the sphere. When
$\dosc{=}2$, writing $x_i = (\cos\theta_i, \sin\theta_i)$,
Equation~\eqref{eq:lohe} reduces to the scalar Kuramoto
model~\citep{YK:75,SHS:00}
$\dot\theta_i = \omega_i + \sum_j w_{ij}\sin(\theta_j - \theta_i)$, a
canonical model of coupled electrical and mechanical oscillators
(Appendix~\ref{app:kuramoto-derivation} shows this derivation). We set
$\Omega_i = 0$ throughout; non-zero $\Omega_i$ could encode positional
information and are a direction for future work.


Fixed-query oscillator attention specializes~\eqref{eq:lohe} to the transformer
setting. Let $e_1,\ldots,e_T \in \mathbb{R}^{d_{\rm model}}$ be input token
embeddings with $T$ the sequence length. The mechanism partitions the
oscillators into two asymmetric roles, with one of each kind per input token.
\emph{Anchor oscillators} $r_j \in \mathbb{S}^{\dosc-1}$, $j=1,\ldots,T$, are
reference points on the sphere. They are learned during training and held fixed
at inference: they do not evolve during the dynamics. The anchors are
  parameterized per absolute sequence position and per head: $r_j$ is a learned
  parameter indexed by the position $j$, shared across inputs and independent of
  token content. The anchors therefore carry positional information, a point we
  return to and quantify in Section~\ref{sec:bidirectional}. The anchors play
the role that learned queries play in softmax attention, but they are now
positions on the sphere rather than free vectors. \emph{Free oscillators}
$z_i \in \mathbb{S}^{\dosc-1}$, $i=1,\ldots,T$, are the dynamical
variables. They evolve under the influence of the anchors via the Lohe
equation~\eqref{eq:lohe}, pulled by positive, input-dependent coupling~weights
\begin{equation}
  \Wij = \sigma\!\left(\frac{(Fe_i)^\top (Ge_j)}{\sqrt{d_h}}\right),
  \label{eq:coupling}
\end{equation}
where $F, G \in \mathbb{R}^{d_h \times d_{\rm model}}$ are learned projection
matrices and $\sigma : \mathbb{R} \to \mathbb{R}_{>0}$ is any strictly positive
function (we use softplus, $\sigma(x) = \log(1+e^x)$, an element-wise positive
nonlinearity that avoids the global reduction required by
softmax). Softplus is a convenience although it is not required: a smooth
  exponential coupling is natural (e.g., strictly positive, and it sharpens
  large similarities) but the ablations show it is not required. Replacing it
  with $\mathrm{relu}(x)+10^{-3}$ or $\mathrm{elu}(x)+1$ leaves accuracy
  essentially unchanged on both bidirectional and language-modeling tasks
  (Appendix~\ref{app:details}). In the analogy with softmax attention, the
anchors $r_j$ play the role of queries (fixed reference points), the free
oscillators $z_i$ play the role of keys (input-dependent states settled by the
dynamics), and $F, G$ are projection matrices analogous to $W_Q, W_K$ in
standard attention; the value projection $W_V$ acts on the value aggregation as
usual. The free oscillators follow the Lohe dynamics~\eqref{eq:lohe}:
\begin{equation}
  \dot z_i = \left(I - z_i z_i^\top\right)
  \underbrace{\sum_{j=1}^T \Wij\, r_j}_{\displaystyle h_i},
  \label{eq:fixedquery}
\end{equation}
where $h_i = \sum_j \Wij r_j \in \mathbb{R}^{\dosc}$ is the \emph{weighted
  anchor sum}. Since $z_i$ is constrained to the unit sphere, it settles at the
unit vector in the direction of $h_i$. The dynamics~\eqref{eq:fixedquery}
therefore converge to the point on $\mathbb{S}^{\dosc-1}$ most aligned with
$h_i$ (Section~\ref{sec:theory}, Theorem~\ref{thm:uniqueness}, proves uniqueness
and global stability):
\begin{equation}
  \kstar = \frac{h_i}{\norm{h_i}}.
  \label{eq:fixedpoint}
\end{equation}
Cosine similarities between settled free oscillators and anchors
are read out as normalized attention weights:
\begin{equation}\label{eq:readout}
  \aij = \frac{1 + \kstar{}^\top r_j}
              {\displaystyle\sum_{l=1}^{T}
               \left(1 + \kstar{}^\top r_l\right)},
\end{equation}
which is a linear normalization of shifted cosine similarities; the only sum is
the affine denominator $\sum_l (1 + z_i^{*\top} r_l)$, with no
exponentiation.\footnote{A power $p{\geq}1$ in the numerator and denominator is
  a natural sharpening generalization that improves performance over $p{=}1$ on
  the tasks we evaluate (Section~\ref{sec:readout}). We use $p{=}1$ throughout
  the headline experiments because it can be the hardware-native readout;
  $p > 1$ is available as a software-side improvement when digital
  post-processing per oscillator is acceptable.} A note on terminology:
  synchronization here means phase-locking of the free oscillators to the fixed
  anchors, the regime of pacemaker-driven Kuramoto networks and of injection
  locking. The free oscillators do not interact with one another. Such coupling
  and nonzero natural frequencies, which are deliberately excluded here and left
  for future work, is what yields the closed form and the convergence
  guarantee. For autoregressive applications, attention is restricted to past
tokens by setting $\Wij = 0$ for $j > i$ in~\eqref{eq:coupling} and restricting
the sum in~\eqref{eq:readout} to $l \leq i$; this is standard causal masking and
is orthogonal to the mechanism. The module operates with $H$ independent
attention heads of dimension $d_h = d_{\rm model}/H$; we have described one
head, indexing all per-head quantities by $(h)$ below. The output of head $h$ is
the weighted combination of value vectors $v_j^{(h)} = W_V^{(h)} e_j$:
\begin{equation}
  o_i^{(h)} = \sum_{j=1}^{T} \aij^{(h)}\, v_j^{(h)}.
  \label{eq:output}
\end{equation}
Each head maintains its own $r_j^{(h)}$, $F^{(h)}, G^{(h)}, W_V^{(h)}$,
producing output $o_i^{(h)}$ as in~\eqref{eq:output}. The $H$ head outputs are
projected by $W_O \in \mathbb{R}^{d_{\rm model} \times d_{\rm model}}$,
following the standard transformer block~\citep{AV-NS-NP-JU-LJ-AG-LK-IP:17}.

Training and inference share a single fixed point. During training, the
closed-form fixed point~\eqref{eq:fixedpoint} is used directly: it is
differentiable in all parameters and costs $O(T \cdot \dosc)$ per token (a
weighted sum over $T$ anchors in $\mathbb{R}^{\dosc}$, then a normalization).
At hardware inference, the oscillator array runs Equation~\eqref{eq:fixedquery}
physically until it settles. Both paths converge to the same $\kstar$, so there
is no algorithmic gap between training and hardware inference, only a
finite-time approximation gap that Section~\ref{sec:convergence} characterizes
empirically. Figure~\ref{fig:architecture} shows the resulting hybrid pipeline:
coupling weights and anchor positions are computed digitally in the front-end,
the oscillator network equilibrates the dynamics~\eqref{eq:fixedquery}, and the
digital back-end reads out attention weights via~\eqref{eq:readout} and feeds
them to the remaining transformer~block. In a purely digital setting there may
be no reason to prefer oscillator attention over softmax; the latter is
simpler, faster, and better understood. The oscillator mechanism is designed for
physical substrates, and we evaluate it against softmax solely as a calibration
of how much the physically constrained mechanism costs in accuracy relative to
the unconstrained digital baseline. This cost is modest: on bidirectional tasks
oscillator attention matches or exceeds softmax, and on causal language modeling
the gap closes predictably with the oscillator dimension.


\begin{figure}[t]
\centering
\begin{tikzpicture}[
  font=\small, >=Stealth,
  digital/.style={rectangle, rounded corners=3pt,
    draw=blue!55!black, line width=0.8pt, fill=blue!8,
    minimum width=3.2cm, minimum height=2.4cm,
    align=center, inner sep=5pt},
  analog/.style={rectangle, rounded corners=4pt,
    draw=brown!70!black, line width=1.1pt, fill=orange!9,
    minimum width=4.6cm, minimum height=2.4cm,
    align=center, inner sep=5pt},
  keyosc/.style={circle, draw=blue!80!black, line width=1.0pt,
    fill=blue!35, minimum size=5pt, inner sep=0pt},
  queryosc/.style={rectangle, draw=red!80!black, line width=1.0pt,
    fill=red!25, minimum size=5pt, inner sep=0pt},
  spring/.style={decorate,
    decoration={coil, aspect=0.5, segment length=3pt,
      amplitude=2pt, pre length=2pt, post length=2pt},
    draw=brown!65!black, line width=0.6pt},
  arrowlabel/.style={font=\scriptsize\itshape, text=black!70},
]
\node[digital] (frontend) at (0,0)
  {\textbf{Digital front-end}\\[3pt]
   \scriptsize\itshape token embeddings\\
   \scriptsize\itshape couplings $\Wij$};

\node[analog, right=1.6cm of frontend] (analog)
  {\phantom{X}\\[28pt]\phantom{X}};
\node[anchor=north, font=\small\bfseries\itshape,
      text=brown!70!black, inner sep=0pt]
  at ([yshift=-4pt]analog.north) {Oscillator network};

\coordinate (ctr) at ([yshift=-6pt]analog.center);
\draw[brown!40!black, line width=0.8pt] (ctr) circle (.75cm);
\foreach \j/\ang in {0/90,1/180,2/270,3/0}
  \node[queryosc] (q\j) at ($(ctr)+(\ang:.75cm)$) {};
\foreach \i/\ang in {0/45,1/135,2/225,3/315}
  \node[keyosc]   (k\i) at ($(ctr)+(\ang:.75cm)$) {};
\draw[spring](k0)--(q0); \draw[spring](k0)--(q3);
\draw[spring](k1)--(q0); \draw[spring](k1)--(q1);
\draw[spring](k2)--(q1); \draw[spring](k2)--(q2);
\draw[spring](k3)--(q2); \draw[spring](k3)--(q3);

\node[digital, right=1.6cm of analog] (backend)
  {\textbf{Digital back-end}\\[3pt]
   \scriptsize\itshape readout~\eqref{eq:readout}\\
   \scriptsize\itshape aggregation~\eqref{eq:output}};

\draw[->, line width=0.85pt] (frontend.east) -- (analog.west)
  node[midway, above=2pt, arrowlabel] {$\Wij,\;\qj$};
\draw[->, line width=0.85pt] (analog.east) -- (backend.west)
  node[midway, above=2pt, arrowlabel] {$\kstar$};

\node[anchor=north, font=\scriptsize, inner sep=0pt]
  at ([yshift=-6pt]analog.south){%
  \textcolor{red!70!black}{$\blacksquare$}\ anchor $r_j$
  (fixed at inference)\quad
  \textcolor{blue!70!black}{$\bullet$}\ free oscillator $z_i$};
\end{tikzpicture}
\caption{\textbf{Fixed-query oscillator attention.} Coupling weights $\Wij$ and
  anchor positions $r_j$ are computed digitally and loaded into the oscillator
  array. Free oscillators $z_i$ evolve on the sphere
  under~\eqref{eq:fixedquery}, pulled toward the fixed anchors $r_j$ by springs
  encoding $\Wij$. The settled positions $\kstar$ are read out as attention
  weights~\eqref{eq:readout} by the digital back-end. The equilibration runs in
  physical dynamics, not in von Neumann arithmetic; the energy savings come from
  this stage, while the digital front-end and back-end retain conventional
  costs.}
\label{fig:architecture}
\end{figure}

\begin{remark}[\emph{Substrate-dictated design choices}]
\label{rem:hardware}
Every architectural choice in the mechanism is constrained by what physical
oscillator substrates can implement, rather than by software optimization.
\emph{Positive coupling ($\Wij > 0$)}: passive coupling elements (resistive,
capacitive, inductive) are positive-valued; any coupling function
$\sigma : \mathbb{R} \to \mathbb{R}_{>0}$ ensures this, and
Theorem~\ref{thm:uniqueness} shows that this guarantees a unique fixed point.
\emph{Sphere geometry ($z_i, r_j \in \mathbb{S}^{\dosc-1}$)}: physical
oscillators have a natural state space defined by their dynamics:
$\mathbb{S}^1$ for phase oscillators, $\mathbb{S}^{\dosc-1}$ for
higher-dimensional substrates.  \emph{Fixed query anchors}: driving oscillators
with a fixed reference signal (injection locking in electrical
systems~\citep{ATS-SC-CD-MA-TG-EC-SK-JN-MJ-MJA-BLB:22}, entrainment in
mechanical and biological ones) is the standard way to steer oscillator
networks toward chosen targets, and it implements $r_j$ in hardware.
\emph{Cosine readout, linear normalization}: inner products on the sphere are
the natural observable from oscillator hardware, and the hardware measures
$\kstar{}^\top r_j$. The linear normalization in the readout requires only
division, which is cheap to perform digitally on the back-end; the
exponentiation in softmax has no efficient physical analog, and is the
operation we replace.  \hfill $\square$
\end{remark}

\begin{remark}[\emph{Why dynamics rather than direct
    normalization}]\label{rem:whydynamics}
  Attention weights could be computed directly from their algebraic
    expression and without any dynamics. However, using the dynamics for the
    computation of attention improves energy efficiency, as argued, and
    robustness.
    In fact, computing the formula directly means every component error reaches
    the output. Reaching the same weights as the equilibrium of a dynamical
    system means errors are absorbed as the state relaxes, because that
    equilibrium is attracting. Theorem~\ref{thm:uniqueness} proves that the
    equilibrium is unique and reached from almost every starting point, and
    Appendix~\ref{app:robustness} shows that task accuracy unchanged when the
    programmed couplings are perturbed by tens of percent.  \hfill $\square$
\end{remark}

\section{Theoretical analysis}\label{sec:theory}
For a generic Kuramoto or Lohe network, multiple stable equilibria typically
coexist, and which one is reached depends sensitively on initial conditions. The
fixed-query design eliminates this ambiguity: because anchors $r_j$ are external
forcing terms rather than free oscillators, each free oscillator $z_i$ minimizes
a potential with exactly one minimum on $\mathbb{S}^{\dosc-1}$.
Theorem~\ref{thm:uniqueness} establishes uniqueness and global stability, and
Propositions~\ref{prop:degenerate} and~\ref{prop:antipodal} characterize the two
ways finite-time convergence can fail in practice, both controlled by the
oscillator dimension $\dosc$. Throughout, we treat each free oscillator $z_i$
independently: under the fixed-query dynamics~\eqref{eq:fixedquery}, the
evolution of $z_i$ depends only on the anchors $r_j$ and couplings $w_{ij}$ (not
on the other free oscillators) and is governed by the gradient flow induced by
the weighted anchor sum $h_i = \sum_j w_{ij}\, r_j$.

The results below follow directly from the fixed-query structure, and
  their value lies in what they guarantee for hardware: uniqueness and
  almost-global attractivity hold for every physical realization of the flow,
  which justifies training on the algebraic fixed point and deploying the
  physics.

\begin{theorem}[Uniqueness and global stability]\label{thm:uniqueness}
  Given anchors $r_1, \ldots, r_T \in \mathbb{S}^{\dosc-1}$ and weights
  $w_{i1}, \ldots, w_{iT}$, let $h_i = \sum_{j=1}^T w_{ij}\, r_j$ and assume
  that $\|h_i\| > 0$. Then the gradient flow
  \begin{equation}\label{eq:gradflow}
    \dot z_i = (I - z_i z_i^\top)\, h_i,
    \qquad z_i \in \mathbb{S}^{\dosc-1},
  \end{equation}
  has exactly two equilibria, $z_i^* = h_i/\norm{h_i}$ and $-z_i^*$. The
  equilibrium $z_i^*$ is asymptotically stable with basin of attraction
  $\mathbb{S}^{\dosc-1} \setminus \{-z_i^*\}$; the equilibrium $-z_i^*$ is
  unstable.
\end{theorem}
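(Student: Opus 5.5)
The plan is to recognize \eqref{eq:gradflow} as the Riemannian gradient ascent of the linear potential $V(z) = h_i^\top z$ restricted to $\mathbb{S}^{\dosc-1}$, and then to reduce everything to a scalar ODE for the alignment $s(t) = z_i(t)^\top z_i^*$. Write $h = h_i$, $u = z_i^* = h/\norm{h}$, and $z = z_i$. First, check that the sphere is invariant: $\frac{d}{dt}\norm{z}^2 = 2 z^\top (I - zz^\top)h = 0$ when $\norm{z}=1$. Hence the flow is well defined and global in time, since the sphere is compact and the vector field is smooth.

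\emph{Equilibria.} On the sphere, $(I-zz^\top)h = 0$ holds iff $h = (z^\top h) z$, i.e.\ $h$ is parallel to $z$. Since $\norm{h}>0$ and $\norm{z}=1$, this forces $z = \pm u$, so these are exactly the two equilibria.

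\emph{Scalar reduction.} Differentiating $s = z^\top u$ gives
\begin{equation*}
  \dot s = u^\top (I - zz^\top) h = \norm{h}\,\bigl(1 - s^2\bigr),
\end{equation*}
a closed logistic-type equation on $[-1,1]$ with fixed points $s = \pm 1$. For $s(0) \in (-1,1)$, $s$ is strictly increasing and cannot reach $\pm1$ in finite time by uniqueness of ODE solutions. Explicitly, $s(t) = \tanh\bigl(\norm{h} t + \operatorname{artanh} s(0)\bigr) \to 1$. Since $\norm{z - u}^2 = 2(1 - s)$, this gives $z(t) \to u$ from every initial condition except $z(0) = -u$, which is the claimed basin $\mathbb{S}^{\dosc-1}\setminus\{-u\}$.

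\emph{Stability.} Lyapunov stability of $u$ follows from the monotonicity of $s$. For any $\epsilon$-neighbourhood $\{s > 1-\epsilon^2/2\}$ of $u$, a trajectory starting there stays there, because $s$ never decreases. Together with attractivity this gives asymptotic stability. Instability of $-u$ also follows from the scalar equation: any initial condition with $s(0) = -1+\delta$, for arbitrarily small $\delta>0$, has $s(t) \to 1$, so the trajectory leaves every small neighbourhood of $-u$. One could alternatively linearize, since the tangent Jacobian at $\pm u$ is $\mp \norm{h} I$ on $T_{\pm u}\mathbb{S}^{\dosc-1}$, but the scalar argument avoids computing the Riemannian Hessian.

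The only step needing care is the invariance and non-attainment argument: $s$ must not hit $-1$ or $1$ in finite time, and $z$ itself (not just $s$) must converge. Both follow from $\norm{z-u}^2 = 2(1-s)$ and uniqueness of solutions of the scalar ODE. I expect no genuine obstacle, since the result follows directly from the fixed-query structure.
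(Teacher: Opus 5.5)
Your proof is correct, but it takes a different route from the paper's.

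\textbf{The paper's route.} The paper takes $V(z_i) = -z_i^\top h_i$ as a Lyapunov function and shows $\dot V = -(\norm{h_i}^2 - (z_i^\top h_i)^2) \le 0$. It then applies LaSalle's invariance principle on the compact invariant sphere to get convergence to $\{\pm z_i^*\}$. Finally it excludes $-z_i^*$ because $V$ is non-increasing and $V(z_i(0)) < \norm{h_i} = V(-z_i^*)$ whenever $z_i(0) \neq -z_i^*$.

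\textbf{Your route.} Your alignment $s = z_i^\top z_i^*$ is exactly $-V/\norm{h_i}$, so both proofs track the same quantity. The difference is that you notice its derivative depends on $s$ alone, giving $\dot s = \norm{h_i}(1-s^2)$, which you solve in closed form. That makes LaSalle unnecessary.

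\textbf{What each buys.} The paper's argument is the more portable template. It needs only a Lyapunov function and compactness, so it extends to gradient flows with no closed scalar reduction, such as the interacting-oscillator consensus settings the paper cites. However, it leaves the Lyapunov stability of $z_i^*$ and the instability of $-z_i^*$ implicit. It also does not spell out why convergence to the two-point set means convergence to a single point.

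Your argument proves all three claims (attractivity, stability, instability) explicitly, and it is quantitative. The time to reach alignment $s_1$ is $(\operatorname{artanh} s_1 - \operatorname{artanh} s(0))/\norm{h_i}$. This scales as $1/\norm{h_i}$ and diverges only logarithmically as $s(0)\to -1$. Those are exactly the two finite-time failure modes, small $\norm{h_i}$ and near-antipodal initialization, that the paper handles separately in Propositions~\ref{prop:degenerate} and~\ref{prop:antipodal} and Section~\ref{sec:convergence}.
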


\begin{proof}
  Equilibria of~\eqref{eq:gradflow} satisfy $(I - z_i z_i^\top)\, h_i = 0$.
  Combined with $\norm{z_i}=1$, this yields exactly two solutions
  $z_i = \pm z_i^*$. Consider the energy function
  $V(z_i) = -z_i^\top h_i$. Differentiating along the trajectories
  of~\eqref{eq:gradflow},
  \[
    \dot V
    = -h_i^\top \dot z_i
    = -h_i^\top (I - z_i z_i^\top)\, h_i
    = -\bigl(\norm{h_i}^2 - (z_i^\top h_i)^2\bigr)
    \leq 0,
  \]
  where the last inequality follows from the Cauchy--Schwarz inequality applied
  to $z_i^\top h_i$ with $\norm{z_i}=1$. Equality holds if and only if
  $(z_i^\top h_i)^2 = \norm{h_i}^2$ or, equivalently, if and only if
  $z_i = \pm z_i^*$. By LaSalle's invariance principle on the compact,
  positively invariant manifold $\mathbb{S}^{\dosc-1}$, every trajectory
  converges to the set $\{z_i^*, -z_i^*\}$~\citep{HKK:02-bis}. For any initial
  condition $z_i(0) \neq -z_i^*$, the Cauchy--Schwarz inequality gives
  $V(z_i(0)) < V(-z_i^*) = \norm{h_i}$. Since $V$ is non-increasing along the
  trajectories of~\eqref{eq:gradflow}, $V(z_i(t)) < \norm{h_i}$ for all
  $t \geq 0$, ruling out convergence to $-z_i^*$. Hence every trajectory with
  $z_i(0) \neq -z_i^*$ converges asymptotically to $z_i^*$.
\end{proof}

Theorem~\ref{thm:uniqueness} guarantees asymptotic convergence under the
hypothesis $\norm{h_i} > 0$, but practical inference stops the dynamics after a
finite integration window, and the hypothesis itself can be violated when the
anchor sum collapses. Two distinct failure modes can therefore compromise the
practical implementation of the proposed scheme: (i) \emph{degenerate anchor
  positions}, where $\norm{h_i}$ is anomalously small and the gradient
flow~\eqref{eq:gradflow} has a vanishing driving force; and (ii) \emph{antipodal
  initialization}, where the trajectory starts close to the unstable equilibrium
$-z_i^*$. The softplus activation in~\eqref{eq:coupling} keeps all
couplings $\Wij$ strictly positive, ensuring that $h_i$ is a positive
combination of the anchors; this does not by itself rule out
$\norm{h_i}=0$ (e.g., on $\mathbb{S}^1$ with $r_2 = -r_1$ and $w_1 = w_2$),
but such exact cancellations are a measure-zero event for generic
anchor configurations. More importantly, $\norm{h_i}$ can still be small
even when nonzero, slowing the dynamics. The two propositions below
quantify both failure modes and show their probabilities decay exponentially
with $\dosc$.


\begin{proposition}[Degenerate positions]\label{prop:degenerate}
  Let anchors $r_1, \ldots, r_T$ be drawn independently and uniformly from
  $\mathbb{S}^{\dosc-1}$,
  $w_i = [w_{i1}, \ldots, w_{iT}]^\top \in \mathbb{R}^T_{>0}$ be a vector of
  positive weights, and $h_i = \sum_{j=1}^T w_{ij}\, r_j$. Then
  \[
    \mathbb{E}\bigl[\norm{h_i}^2\bigr] = \norm{w_i}^2,
  \]
  and for every $\varepsilon \in (0,1)$ and for some absolute constant $c > 0$
  \[
    \Pr\bigl(\norm{h_i}^2 \leq \varepsilon\, \norm{w_i}^2\bigr) \leq
    2\exp\!\bigl(-c\,(\dosc-1)(1-\varepsilon)^2\bigr).
  \]
\end{proposition}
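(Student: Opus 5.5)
The plan is to treat $\norm{h_i}$ as a Lipschitz function of the anchors and apply concentration of measure on the product of spheres. The expectation identity comes first. Expanding gives
\[
  \norm{h_i}^2=\sum_j w_{ij}^2\norm{r_j}^2+\sum_{j\neq k}w_{ij}w_{ik}\,r_j^\top r_k .
\]
Since the $r_j$ are independent and uniform, $\mathbb{E}[r_j]=0$, so $\mathbb{E}[r_j^\top r_k]=\mathbb{E}[r_j]^\top\mathbb{E}[r_k]=0$ for $j\neq k$. With $\norm{r_j}=1$ this yields $\mathbb{E}\norm{h_i}^2=\norm{w_i}^2$. Positivity of the weights plays no role here, and none in the tail bound either.

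For the tail, define $f(r_1,\ldots,r_T)=\norm{\sum_j w_{ij} r_j}$ on the product manifold $M=(\mathbb{S}^{\dosc-1})^T$ with its product Riemannian (equivalently Euclidean) metric. By the triangle and Cauchy--Schwarz inequalities,
\[
  |f(r)-f(r')|\le \Bigl\|\sum_j w_{ij}(r_j-r_j')\Bigr\|\le \norm{w_i}\Bigl(\sum_j\norm{r_j-r_j'}^2\Bigr)^{1/2},
\]
so $f$ is $\norm{w_i}$-Lipschitz.

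The key input is that the uniform measure on $\mathbb{S}^{\dosc-1}$ satisfies a log-Sobolev inequality with constant of order $1/(\dosc-1)$. This follows from Bakry--\'Emery, since $\mathrm{Ric}=(\dosc-2)$; for $\dosc=2,3$ one uses instead the known spectral-gap and log-Sobolev constants of low-dimensional spheres, which are still $\gtrsim \dosc-1$. Log-Sobolev inequalities tensorize with no loss in $T$, so the Herbst argument gives
\[
  \Pr\bigl(f\le \mathbb{E}f-t\bigr)\le \exp\!\bigl(-c_0(\dosc-1)t^2/\norm{w_i}^2\bigr).
\]
I expect this dimension-free concentration on the product, uniform in $T$ and including the small-$\dosc$ cases, to be the main obstacle. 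It is the step where I would cite a standard reference (Ledoux's concentration monograph) rather than reprove it.

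It remains to pass from the mean of $f$ to the second moment we computed. The Poincaré inequality, implied by log-Sobolev, gives $\mathrm{Var}(f)\le C\norm{w_i}^2/(\dosc-1)$. Hence
\[
  \mathbb{E}f\ \ge\ \sqrt{\mathbb{E}f^2-\mathrm{Var} f}\ \ge\ \norm{w_i}\bigl(1-C'/\sqrt{\dosc-1}\bigr).
\]
The event $\norm{h_i}^2\le\varepsilon\norm{w_i}^2$ is the event $f\le\sqrt{\varepsilon}\,\norm{w_i}$. This is a downward deviation from the mean of at least $t=\norm{w_i}\bigl[(1-\sqrt\varepsilon)-C'/\sqrt{\dosc-1}\bigr]$, and $1-\sqrt\varepsilon\ge(1-\varepsilon)/2$. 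We then split into two cases:
\begin{itemize}
  \item If $C'/\sqrt{\dosc-1}\le(1-\varepsilon)/4$, then $t\ge\norm{w_i}(1-\varepsilon)/4$, and the concentration bound gives $\exp(-c(\dosc-1)(1-\varepsilon)^2)$ with $c=c_0/16$.
  \item Otherwise $(\dosc-1)(1-\varepsilon)^2<16C'^2$. Choosing $c\le \ln 2/(16C'^2)$ makes the right-hand side $2\exp(-c(\dosc-1)(1-\varepsilon)^2)$ at least $1$, so the bound holds trivially.
\end{itemize}
The factor $2$ in the statement absorbs this case split. Taking $c$ as the minimum of the two constants completes the argument.
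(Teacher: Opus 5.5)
Your proof is correct, but it takes a different route from the paper's. The paper never concentrates $\norm{h_i}$ itself. It writes $\norm{h_i}^2=\bar r^\top(w_iw_i^\top\otimes I_{\dosc})\bar r$ for the stacked anchor vector $\bar r$, and gives each $r_j$ (and then $\bar r$) the convex concentration property with $K\asymp(\dosc-1)^{-1/2}$. It then applies Adamczak's Hanson--Wright inequality with $\norm{A}_{\rm F}^2=\dosc\norm{w_i}^4$ and $\norm{A}_2=\norm{w_i}^2$ at $t=(1-\varepsilon)\norm{w_i}^2$, and bounds the resulting minimum from below by $\tfrac12(\dosc-1)(1-\varepsilon)^2$. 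Because that inequality is centred directly at $\mathbb{E}\norm{h_i}^2=\norm{w_i}^2$, the quantity computed in the first part, the paper needs no comparison of moments and gets two-sided control of $\norm{h_i}^2$ at no extra cost.

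You instead concentrate the first-order, $\norm{w_i}$-Lipschitz function $\norm{h_i}$. The price is centering at $\mathbb{E}\norm{h_i}$ rather than at $\sqrt{\mathbb{E}\norm{h_i}^2}$. You pay it with the Poincar\'e variance bound and a case split, in which the regime $(\dosc-1)(1-\varepsilon)^2\lesssim 1$ is absorbed by the prefactor $2$; both steps check out. In exchange, you avoid the quadratic-form machinery and the Frobenius and operator norm bookkeeping, and you get a purely sub-Gaussian one-sided tail. You also make explicit, through log-Sobolev tensorization, the step the paper invokes when it asserts that $\bar r$ inherits the convex concentration property with the same constant.

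Two minor imprecisions, neither affecting validity. The chordal metric is dominated by the product geodesic metric, not equivalent to it, though that is the direction Herbst needs. And only $\dosc=2$ (where $\mathrm{Ric}=0$) needs the separate circle argument, since $\dosc=3$ already has $\mathrm{Ric}=1>0$.
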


\begin{proof}
  For $r_j$ drawn uniformly from $\mathbb{S}^{\dosc-1}$,
  \[
    \mathbb{E}[r_j] = 0,
    \qquad
    \mathbb{E}[\norm{r_j}^2] = 1.
  \]
  The first follows from antipodal symmetry of the sphere, the second
  from $\norm{r_j} = 1$. Expanding $\norm{h_i}^2 = h_i^\top h_i$ as a double
  sum over independent indices $j, k \in \{1,\ldots,T\}$,
  \[
    \mathbb{E}[\norm{h_i}^2]
    = \mathbb{E}\Big[\sum_{j,k} w_{ij} w_{ik}\, r_j^\top r_k\Big]
    = \sum_j w_{ij}^2\, \mathbb{E}[\norm{r_j}^2]
    + \sum_{j \neq k} w_{ij} w_{ik}\, \mathbb{E}[r_j^\top r_k]
    = \norm{w_i}^2,
  \]
  where the second equality uses linearity of expectation and isolates the
  diagonal and off-diagonal terms, and the third uses
  $\mathbb{E}[\|r_j\|^2] = 1$ together with independence and zero mean of
  the $r_j$
  ($\mathbb{E}[r_j^\top r_k] = \mathbb{E}[r_j]^\top \mathbb{E}[r_k] = 0$).

  Stack the anchors into the column vector
  $\bar r = [r_1^\top, \ldots, r_T^\top]^\top \in \mathbb{R}^{T\dosc}$,
  noting that $\mathbb{E}[\bar r] = 0$. L\'evy's concentration of measure
  on $\mathbb{S}^{\dosc-1}$~\citep[Thm.~5.7]{boucheron2013concentration}
  states that for every $1$-Lipschitz function $f : \mathbb{R}^{\dosc}
  \to \mathbb{R}$,
  \[
    \Pr\bigl(\bigl|f(r_j) - \mathbb{E}[f(r_j)]\bigr| \geq t\bigr)
    \leq 2\exp\bigl(-c_0(\dosc-1)t^2\bigr)
  \]
  for an absolute constant $c_0 > 0$. Since this bound holds for all
  $1$-Lipschitz functions, it holds in particular for all $1$-Lipschitz convex
  functions. Thus, following~\citep[Def.~2.2]{adamczak2015note}, the random
  vector $r_j$ satisfies the convex concentration property with constant
  $K = 1/\sqrt{c_0(\dosc-1)}$. Since the $r_j$ are independent, the joint vector
  $\bar r$ inherits the convex concentration property with the same constant
  $K$~\citep[Chap.~6]{boucheron2013concentration}. By Theorem~2.5
  of~\citep{adamczak2015note}, there exists an absolute constant $c_1 > 0$ such
  that, for every symmetric matrix $A \in \mathbb{R}^{T\dosc \times T\dosc}$ and
  every $t > 0$,
  \begin{align}\label{eq:adamczak}
    \Pr\bigl(\bigl|\bar r^\top A \bar r - \mathbb{E}[\bar r^\top A \bar r]\bigr|
            \geq t\bigr)
    \leq 2\exp\!\biggl(
    -c_1\min\!\Bigl(
      \tfrac{t^2}{2K^4\,\norm{A}_{\rm F}^2},
      \tfrac{t}{K^2\,\norm{A}_{2}}
    \Bigr)
    \biggr).
  \end{align}
  We apply this to $A = w_i w_i^\top \otimes I_{\dosc} \in
  \mathbb{R}^{T\dosc \times T\dosc}$, the symmetric block matrix with
  $\dosc \times \dosc$ blocks $[A]_{jk} = w_{ij} w_{ik}\, I_{\dosc}$ for
  $j, k \in \{1, \ldots, T\}$. By block matrix multiplication,
  \[
    \bar r^\top A \bar r
    = \sum_{j,k} r_j^\top [A]_{jk}\, r_k
    = \sum_{j,k} w_{ij} w_{ik}\, r_j^\top r_k
    = \norm{h_i}^2.
  \]
  Each block $[A]_{jk} = w_{ij} w_{ik}\, I_{\dosc}$ has $\dosc$ nonzero entries,
  each equal to $w_{ij} w_{ik}$, so
  $\norm{[A]_{jk}}_{\rm F}^2 = \dosc\,(w_{ij} w_{ik})^2$
  and, summing,
  \[
    \norm{A}_{\rm F}^2
    = \sum_{j,k} \dosc\,(w_{ij} w_{ik})^2
    = \dosc\,\Big(\sum_j w_{ij}^2\Big)\Big(\sum_k w_{ik}^2\Big)
    = \dosc\,\norm{w_i}^4.
  \]
  For the spectral norm, $A = w_i w_i^\top \otimes I_{\dosc}$ is a Kronecker
  product. Its eigenvalues are pairwise products of eigenvalues of
  $w_i w_i^\top$ and $I_{\dosc}$. The matrix $w_i w_i^\top$ has eigenvalue
  $\norm{w_i}^2$ once and zero $T-1$ times, while $I_{\dosc}$ has eigenvalue $1$
  with multiplicity $\dosc$. Therefore the nonzero eigenvalues of $A$ are
  $\norm{w_i}^2$ with multiplicity $\dosc$ and, because $A$ is symmetric, the
  spectral norm equals the largest eigenvalue: $\norm{A}_{2} = \norm{w_i}^2$.

  Substituting $K^2 = 1/(c_0(\dosc-1))$,
  $\norm{A}_{\rm F}^2 = \dosc\,\norm{w_i}^4$, $\norm{A}_{2} = \norm{w_i}^2$,
  and $\mathbb{E}[\bar r^\top A \bar r] = \norm{w_i}^2$ into~\eqref{eq:adamczak}:
  \[
    \frac{t^2}{2K^4\,\norm{A}_{\rm F}^2}
      = \frac{c_0^2\,(\dosc-1)^2\,t^2}{2\,\dosc\,\norm{w_i}^4},
    \qquad
    \frac{t}{K^2\,\norm{A}_{2}}
      = \frac{c_0\,(\dosc-1)\,t}{\norm{w_i}^2}.
  \]
  Hence
  \begin{align}\label{eq:prob_inequality}
    \Pr\bigl(\bigl|\norm{h_i}^2 - \norm{w_i}^2\bigr| \geq t\bigr)
    \leq 2\exp\!\biggl(
    -c_2\min\!\Bigl(
      \tfrac{(\dosc-1)^2\,t^2}{\dosc\,\norm{w_i}^4},
      \tfrac{(\dosc-1)\,t}{\norm{w_i}^2}
    \Bigr)
    \biggr),
  \end{align}
  where $c_2 > 0$ absorbs $c_0$, $c_0^2$, $c_1$, and the factor $1/2$.

  To conclude, fix $\varepsilon \in (0,1)$ and set
  $t = (1-\varepsilon)\,\norm{w_i}^2$. The event
  $\norm{h_i}^2 \leq \varepsilon\,\norm{w_i}^2$ is equivalent to
  $\norm{w_i}^2 - \norm{h_i}^2 \geq (1-\varepsilon)\,\norm{w_i}^2$, hence is
  contained in $\bigl\{\bigl|\norm{h_i}^2 - \norm{w_i}^2\bigr| \geq t\bigr\}$.
  Substituting into~\eqref{eq:prob_inequality},
  \[
    \Pr\bigl(\norm{h_i}^2 \leq \varepsilon\,\norm{w_i}^2\bigr)
    \leq 2\exp\!\biggl(
    -c_2\min\!\Bigl(
      \tfrac{(\dosc-1)^2\,(1-\varepsilon)^2}{\dosc},
      (\dosc-1)(1-\varepsilon)
    \Bigr)
    \biggr).
  \]
  For $\dosc \geq 2$ and $\varepsilon \in (0,1)$, the bounds
  $(\dosc-1)/\dosc \geq 1/2$ and $1-\varepsilon \geq (1-\varepsilon)^2$ give
  \[
    \min\!\Bigl(
    \tfrac{(\dosc-1)^2\,(1-\varepsilon)^2}{\dosc},
    (\dosc-1)(1-\varepsilon)
    \Bigr)
    \geq \tfrac{1}{2}\,(\dosc-1)(1-\varepsilon)^2,
  \]
  yielding
  \[
    \Pr\bigl(\norm{h_i}^2 \leq \varepsilon\,\norm{w_i}^2\bigr)
    \leq 2\exp\!\bigl(-c\,(\dosc-1)(1-\varepsilon)^2\bigr)
  \]
  with $c = c_2/2$.
\end{proof}

The proposition shows that as $\dosc$ grows, $\norm{h_i}$ concentrates sharply
around its mean $\norm{w_i}$, so degenerate positions
($\norm{h_i} \ll \norm{w_i}$) become exponentially rare. In autoregressive
settings where oscillator $i$ has only a subset of active anchors (those
$j \leq i$ under causal masking), the bound applies with
$\norm{w_i}^2 = \sum_j w_{ij}^2$ counting only those active anchors. Early
positions therefore have a small absolute scale $\norm{w_i}$ (few active
anchors), but the concentration is in $\dosc$ rather than in the number of
active anchors, so the relative magnitude $\norm{h_i}/\norm{w_i}$ that governs
the gradient flow is controlled regardless of position; increasing the number of
anchors does not improve this, while increasing their dimension does. This bound
assumes uniformly distributed anchors and is therefore an initialization-time
guarantee. On trained checkpoints the assumption no longer holds, yet the
  conclusion still does: across all three tasks, the measured fraction of
  oscillators with $\norm{h_i} < 0.01$ ranges from none on SVA to
    $0.43\%$ on KWS ($0.025\%$ on the no-positional-encoding checkpoint), so
  the degenerate set remains negligible after training
  (Appendix~\ref{app:robustness}).

  The second failure mode is independent of $\norm{h_i}$ and instead concerns
  the initial condition. Theorem~\ref{thm:uniqueness} guarantees convergence to
  $z_i^*$ for every $z_i(0) \neq -z_i^*$, but trajectories initialized close to
  the unstable equilibrium $-z_i^*$ leave its neighborhood only slowly, and a
  finite integration window may end before the trajectory has reached a useful
  neighborhood of $z_i^*$. To quantify how often this happens, we model an
  initialization in which the free oscillator is independent of the dynamics,
  with $z_i(0)$ drawn uniformly from $\mathbb{S}^{\dosc-1}$. This
    corresponds to a cold start (e.g., a random phase set by injection-lock
    noise) that carries no information about the dynamics, and any warm start
    would concentrate away from $-z_i^*$.

%

\begin{proposition}[Antipodal initialization]\label{prop:antipodal}
  Let $z_i(0)$ be drawn uniformly from $\mathbb{S}^{\dosc-1}$, independently of
  the anchors and weights, and let $z_i^* \in \mathbb{S}^{\dosc-1}$ be the
  stable equilibrium of~\eqref{eq:gradflow}. For every $\alpha \in (0,\pi)$,
  \begin{align}\label{eq:capprob}
    \Pr\bigl(\angle(z_i(0),-z_i^*) < \alpha\bigr)
    = \frac{1}{\sqrt{\pi}}\,
    \frac{\Gamma\!\bigl(\tfrac{\dosc}{2}\bigr)}
    {\Gamma\!\bigl(\tfrac{\dosc-1}{2}\bigr)}
    \int_0^\alpha \sin^{\dosc-2}(\theta)\,d\theta.
  \end{align}
  For every $\alpha \in (0, \pi/2)$, the probability in~\eqref{eq:capprob}
  decays exponentially in $\dosc$.  Moreover, for each fixed
  $\dosc \geq 2$,
  \[
    \Pr\bigl(\angle(z_i(0),-z_i^*) < \alpha\bigr)
    \sim C(\dosc)\,\alpha^{\dosc-1}
    \qquad \text{as } \alpha \to 0,
  \]
  for a constant $C(\dosc) > 0$.
\end{proposition}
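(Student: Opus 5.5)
The plan is to reduce to a one-dimensional computation by rotational invariance. Since $z_i(0)$ is uniform on $\mathbb{S}^{\dosc-1}$ and independent of $z_i^*$, I will condition on the anchors and weights, which fixes $u := -z_i^* \in \mathbb{S}^{\dosc-1}$. Uniform measure is invariant under orthogonal maps, so the conditional probability does not depend on $u$; I can take $u = e_1$ and then average trivially over the conditioning. The event $\angle(z_i(0),u) < \alpha$ is then a spherical cap around $e_1$.

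Next I will write the uniform measure in polar coordinates about $e_1$, $z = (\cos\theta, \sin\theta\, y)$ with $\theta\in[0,\pi]$ and $y \in \mathbb{S}^{\dosc-2}$. The surface element becomes $\sin^{\dosc-2}\theta\, d\theta\, d\sigma_{\dosc-2}(y)$, so the density of $\theta$ is proportional to $\sin^{\dosc-2}\theta$. The normalizing constant is $\int_0^\pi \sin^{\dosc-2}\theta\,d\theta = \sqrt{\pi}\,\Gamma(\tfrac{\dosc-1}{2})/\Gamma(\tfrac{\dosc}{2})$, which follows from the Beta integral $B(\tfrac{\dosc-1}{2},\tfrac12)$. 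This gives \eqref{eq:capprob}. For $\dosc=2$ the exponent is zero, $\mathbb{S}^0$ consists of two points, and the formula reads $\alpha/\pi$; I will check this case separately as a sanity check.

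For exponential decay with $\alpha<\pi/2$, I will bound $\sin\theta\le\sin\alpha<1$ on $[0,\alpha]$, so the integral is at most $\alpha\sin^{\dosc-2}\alpha$. By Gautschi's inequality, the Gamma ratio is at most $\sqrt{\dosc/2}$. The probability is therefore at most $\alpha\sqrt{\dosc/(2\pi)}\,(\sin\alpha)^{\dosc-2}$, which decays like $e^{-\dosc\log(1/\sin\alpha)}$ up to a polynomial factor. Absorbing that factor gives $\le C_\alpha e^{-c_\alpha \dosc}$ for any $c_\alpha<\log(1/\sin\alpha)$. Alternatively, I could note that the cap is contained in $\{z^\top e_1 \ge \cos\alpha\}$ and use Lévy concentration as in Proposition~\ref{prop:degenerate}. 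The direct bound is cleaner, though.

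For the small-$\alpha$ asymptotic I will fix $\dosc$ and use $\sin\theta = \theta(1+O(\theta^2))$. This gives $\int_0^\alpha\sin^{\dosc-2}\theta\,d\theta = \frac{\alpha^{\dosc-1}}{\dosc-1}(1+O(\alpha^2))$, so
\[
C(\dosc) = \frac{\Gamma(\tfrac{\dosc}{2})}{\sqrt{\pi}\,(\dosc-1)\,\Gamma(\tfrac{\dosc-1}{2})} > 0.
\]
The main obstacle is only bookkeeping: the normalization constant and the Gamma-ratio bound needed to make the exponential decay explicit. I expect Gautschi's inequality, or simply $\Gamma(x+\tfrac12)/\Gamma(x)\le\sqrt{x}$, to settle it.
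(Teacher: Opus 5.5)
Your proposal is correct and follows the paper's proof step for step. It derives the cap probability from the $\sin^{\dosc-2}\theta$ polar density (your Beta-integral normalization gives the same constant as the paper's ratio of the areas of $\mathbb{S}^{\dosc-2}$ and $\mathbb{S}^{\dosc-1}$), bounds $\int_0^\alpha \sin^{\dosc-2}\theta\,d\theta \le \alpha\sin^{\dosc-2}\alpha$, and expands $\sin\theta=\theta(1+O(\theta^2))$ for the small-$\alpha$ asymptotic with the same $C(\dosc)$. Your only departures are minor gains in rigor: you justify fixing $-z_i^*$ explicitly through conditioning and rotational invariance, and you replace the paper's Stirling asymptotic for $\Gamma(\tfrac{\dosc}{2})/\Gamma(\tfrac{\dosc-1}{2})$ with the non-asymptotic bound $\Gamma(x+\tfrac12)/\Gamma(x)\le\sqrt{x}$, which follows from log-convexity of $\Gamma$ and gives an explicit decay rate.
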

\begin{proof}
  The probability that a uniformly distributed point on $\mathbb{S}^{\dosc-1}$
  falls within angular distance $\alpha$ of a fixed point equals the ratio of
  the spherical cap area to the total sphere surface area; see, e.g., Lecture~8
  of~\citep{ball1997elementary}. Slicing $\mathbb{S}^{\dosc-1}$ by hyperplanes
  perpendicular to the axis through the cap's pole, the slice at polar angle
  $\theta$ is a $(\dosc-2)$-sphere of radius $\sin\theta$. Integrating the slice
  areas over $\theta \in [0, \alpha]$ gives the cap area
  \[
    \mathrm{Area}\bigl(\mathbb{S}^{\dosc-2}\bigr)\,
    \int_0^\alpha \sin^{\dosc-2}(\theta)\,d\theta,
  \]
  where
  $\mathrm{Area}\bigl(\mathbb{S}^{\dosc-2}\bigr) = 2\pi^{(\dosc-1)/2} /
  \Gamma\bigl((\dosc-1)/2\bigr)$ (where $\Gamma$ is the Euler Gamma function)
  is the surface area of the
  $(\dosc-2)$-dimensional unit sphere. The total surface area of
  $\mathbb{S}^{\dosc-1}$ is $2\pi^{\dosc/2}/\Gamma(\dosc/2)$. Taking the ratio,
  we recover \eqref{eq:capprob}:
  \begin{align*}
    \Pr\bigl(\angle(z_i(0),-z_i^*) < \alpha\bigr) &= \frac{\text{cap
        area}}{\text{total area}} =
    \frac{2\pi^{(\dosc-1)/2}/\Gamma((\dosc-1)/2)}
    {2\pi^{\dosc/2}/\Gamma(\dosc/2)} \cdot \int_0^\alpha
    \sin^{\dosc-2}(\theta)\,d\theta  \\ &= \frac{1}{\sqrt{\pi}}\,
    \frac{\Gamma\!\bigl(\tfrac{\dosc}{2}\bigr)}
    {\Gamma\!\bigl(\tfrac{\dosc-1}{2}\bigr)} \int_0^\alpha
    \sin^{\dosc-2}(\theta)\,d\theta.
  \end{align*}

  For any $\alpha \in (0,\pi/2)$ and for all $\theta \in [0,\alpha]$, we have
  $\sin\theta \leq \sin\alpha < 1$. Thus,
  \[
    \int_0^\alpha \sin^{\dosc-2}(\theta)\,d\theta \leq
    \alpha\,\sin^{\dosc-2}(\alpha),
  \]
  which decays exponentially in $\dosc$. Stirling's
  approximation~\citep[6.1.47]{abramowitz1964handbook} gives
  $\Gamma(\dosc/2)/\Gamma((\dosc-1)/2) \sim \sqrt{\dosc/2}$ as
  $\dosc \to \infty$, so the prefactor in~\eqref{eq:capprob} grows only as
  $\sqrt{\dosc}$. Multiplying by the exponential decay of the integral, the
  probability decays exponentially in $\dosc$ for every fixed
  $\alpha \in (0, \pi/2)$. For small values of $\alpha$, the expansion
  $\sin\theta = \theta + O(\theta^3)$ gives
  \[
    \int_0^\alpha \sin^{\dosc-2}(\theta)\,d\theta
    = \frac{\alpha^{\dosc-1}}{\dosc-1} + O\bigl(\alpha^{\dosc+1}\bigr).
  \]
  The remaining factors, the prefactor $\Gamma(\dosc/2)/(\sqrt{\pi}\,
  \Gamma((\dosc-1)/2))$ and the $1/(\dosc-1)$ from the leading term,
  depend only on $\dosc$, so
  \[
    \Pr\bigl(\angle(z_i(0),-z_i^*) < \alpha\bigr)
    \sim C(\dosc)\,\alpha^{\dosc-1}
    \qquad \text{as } \alpha \to 0,
  \]
  where $C(\dosc) > 0$.
\end{proof}

The proposition makes two distinct scaling claims that should not be conflated.
The exponential-in-$\dosc$ decay holds for every fixed $\alpha \in (0, \pi/2)$
and describes how the unstable hemisphere shrinks as the sphere dimension
grows. The small-$\alpha$ asymptotic $\sim C(\dosc)\,\alpha^{\dosc-1}$ is a
separate statement, holding at each fixed $\dosc \geq 2$, describing the
polynomial rate at which the cap probability vanishes as the threshold
$\alpha$ tightens.

\begin{figure}[t]
\centering
\includegraphics[width=0.65\linewidth]{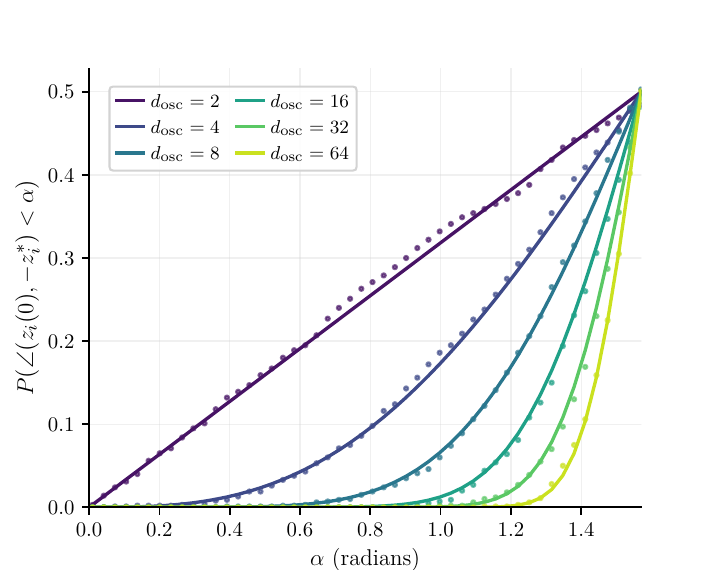}
\caption{\textbf{Validation of Proposition~\ref{prop:antipodal}.}  Markers show
  the empirical fraction of $N{=}1000$ uniform samples on $\mathbb{S}^{\dosc-1}$
  falling within angular distance $\alpha$ of a fixed pole, and lines show the
  closed-form prediction~\eqref{eq:capprob}. The empirical points fall on
    the predicted curve at this sample size across all $\dosc$ and $\alpha$, confirming that the probability of initializing
  $z_i(0)$ near the unstable equilibrium $-z_i^*$ decays sharply with $\dosc$.}
\label{fig:antipodal}
\end{figure}

Proposition~\ref{prop:degenerate} and Proposition~\ref{prop:antipodal} identify
two failure modes for finite-time ODE convergence at low $\dosc$. The
implementation reported here trains and infers using the analytic fixed point
$z_i^* = h_i/\|h_i\|$, so these failure modes affect only physical hardware
deployment, where they can be reduced by three levers. First, $\dosc$ is the
principal architectural lever (Table~\ref{tab:convergence}): the
  near-antipodal failure rate drops from $11.7\%$ at $\dosc=2$ to $0.81\%$ at
  $\dosc=32$, and overall convergence rises from $84.3\%$ to $98.3\%$. Second,
extending the integration window helps: at $\dosc=2$, growing $T_{\max}$ from
$30$ to $5000$ recovers convergence from $84.3\%$ to $98.8\%$ of trajectories. Third,
stronger coupling accelerates the dynamics: the gradient flow scales with
$\|h_i\|$, so increasing coupling magnitudes is equivalent to extending the
integration window. Slow-settling tokens are
  those with small driving norm $\norm{h_i}$, whose landscape is shallowest;
  they account for most of the residual non-convergence at finite $T_{\max}$.
  Coupling strength is a substrate design choice for physical oscillator
  arrays.

\section{Experiments}\label{sec:experiments}
All experiments use the analytic fixed point~\eqref{eq:fixedpoint} during
training, with gradients through the normalization handled automatically by
PyTorch autograd. The softmax baseline uses standard scaled dot-product
attention with identical architecture and hyperparameters. Query anchors are
initialized uniformly on $\mathbb{S}^{\dosc-1}$ and learned jointly with all
other parameters. We use $p{=}1$ throughout; ablation results appear in
Section~\ref{sec:readout}.

\subsection{Bidirectional tasks: keyword spotting and subject-verb
  agreement}\label{sec:bidirectional}
On bidirectional tasks, oscillator attention at $\dosc{=}2$ matches or exceeds
softmax. We verify this on two tasks: acoustic classification (keyword spotting)
and syntactic agreement (subject-verb agreement). Figure~\ref{fig:kws}
summarizes accuracy on both bidirectional tasks side by side; a detailed
analysis and comparison follow.

\begin{figure}[t]
\centering
\includegraphics[width=\linewidth]{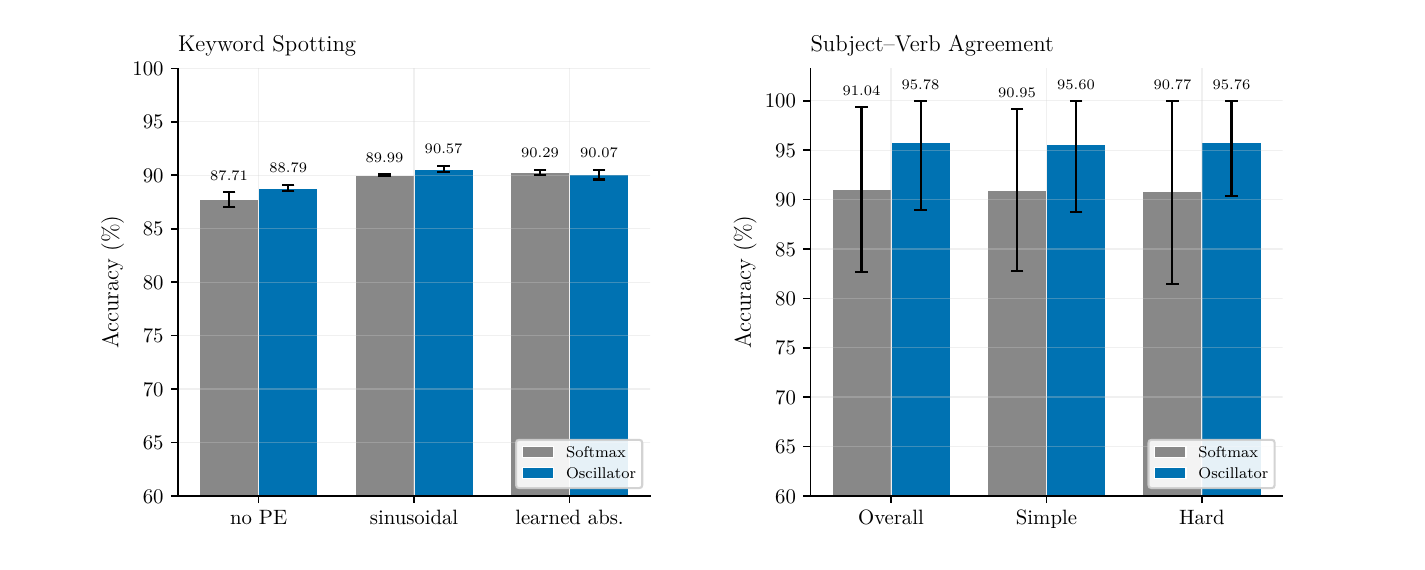}
\caption{\textbf{Bidirectional task accuracy.} \emph{Left:} KWS accuracy for
  softmax and oscillator attention under three positional-encoding conditions
  (none, sinusoidal, learned absolute); means over 5 seeds with $\pm$1 std
  whiskers. The learned anchors carry positional information, so the fair
  comparison is under matched encoding: with sinusoidal or learned absolute
  encoding the two mechanisms perform at parity. Without positional encoding,
  the oscillator's anchors give it a positional signal that softmax
  lacks. \emph{Right:} SVA at the minimum-hardware configuration
  ($d_{\rm model}{=}32$, 1 head, 1 layer): overall, simple, and hard accuracy,
  means over 50 seeds per mechanism with $\pm$1 std whiskers (truncated at
  $100\%$); both models use sinusoidal positional encoding. Training failures
  (hard accuracy below $85\%$): $3/50$ for the oscillator versus $12/50$ for
  softmax.}
\label{fig:kws}
\end{figure}

\subsubsection{Keyword spotting}

For keyword spotting (KWS), we train transformers with $d_{\rm model}{=}32$, 2
heads, 1 layer on 10-class Google Speech Commands~\citep{PW:18} (log-mel
spectrograms, 40 bins, $T{=}49$ frames, 5 seeds per condition).  Without
  positional encoding, the oscillator reaches $88.79 \pm 0.29\%$ versus
  $87.71 \pm 0.70\%$ for softmax, but this comparison is confounded: the anchors
  are learned per absolute position, so the oscillator model receives positional
  information that the PE-free softmax baseline lacks. A frame-permutation probe
  quantifies the effect: under random permutation of the $T{=}49$ input frames,
  the PE-free softmax model is unchanged ($\Delta = 0.00$~pp, as permutation
  invariance requires), while the PE-free oscillator loses $17.06 \pm
  3.64$~pp. This is comparable to the loss suffered by softmax with sinusoidal
  ($21.97$~pp) or learned absolute ($19.38$~pp) positional encoding.  The
  anchors act as a built-in positional encoding of standard strength. We
  therefore compare the mechanisms under matched positional information
  (Figure~\ref{fig:kws}, left): softmax reaches $89.99 \pm 0.09\%$ with
  sinusoidal and $90.29 \pm 0.24\%$ with learned absolute encoding; the
  oscillator reaches $90.57 \pm 0.29\%$ and $90.07 \pm 0.46\%$. Under matched
  conditions the two mechanisms perform at parity, which is the desired
  property: competitive accuracy at the minimal hardware configuration, not an
  advantage over~softmax.

A natural question is whether the oscillator dynamics actually determine the
classification, or whether the answer is already encoded in the value vectors
regardless of the attention pattern. To test this, we freeze $W_V$ at random
initialization and train only the oscillator parameters (anchor vectors and
coupling projections), removing any possibility that the value projection learns
to compensate for weak attention structure. The frozen-$W_V$ model reaches
$88.32 \pm 0.34\%$, only $0.47$ pp below the full model: a learned
  value projection is unnecessary at this configuration. The remaining
    components stay trainable, so no single ablation isolates the dynamics as
    the sole source of accuracy. The case that the dynamics are the operative
    component rests on the design-order and error-correction arguments of
    Remark~\ref{rem:whydynamics} together with the position-matched and
    permutation evidence above.

\begin{table}[t]
\centering
\begin{tabular}{lcc}
\toprule
Condition & Val acc (\%) & $\Delta$ vs softmax \\
\midrule
Softmax                                  & $87.71 \pm 0.70$ & --- \\
Oscillator $\dosc{=}2$ (full model)       & $88.79 \pm 0.29$ & $+1.08$ pp \\
Oscillator, frozen $W_V$                  & $88.32 \pm 0.34$ & $+0.61$ pp \\
\bottomrule
\end{tabular}
\caption{\textbf{KWS results, ablation set (no positional encoding).} 10-class Google Speech Commands,
  $d_{\rm model}{=}32$, 2 heads, 1 layer. Full model, frozen-$W_V$
  ablation, and softmax baseline, all under matched no-positional-encoding
  conditions over 5 seeds.  The
    position-matched comparison across positional-encoding conditions is
    reported in Figure~\ref{fig:kws} (left); without positional encoding, the
    anchors carry positional information that softmax lacks, so the $\Delta$
    column of this table overstates the mechanism difference.}
\label{tab:kws}
\end{table}

\begin{table}[t]
\centering
\begin{tabular}{lcc}
\toprule
Condition & Overall (\%) & Hard (\%) \\
\midrule
Softmax                                  & $91.04 \pm 8.32$ & $90.77 \pm 9.28$ \\
Oscillator $\dosc{=}2$ (full)            & $95.78 \pm 6.82$ & $95.76 \pm 5.37$ \\
Oscillator, frozen $W_V$                 & $95.11 \pm 7.14$ & $94.96 \pm 6.43$ \\
\bottomrule
\end{tabular}
\caption{\textbf{SVA results at minimum-hardware configuration}
  ($d_{\rm model}{=}32$, 1 head, 1 layer, $d_{\rm ff}{=}64$, 9 free
  oscillators per attention layer at $\dosc{=}2$). The ``hard'' split contains
  sentences whose distractor disagrees with the subject. All rows report
    means over 50 seeds per mechanism. Because failed runs skew the mean,
      we also give medians on the hard split: $95.49\%$ (softmax), $97.14\%$
      (full oscillator), and $96.96\%$ (frozen $W_V$). The large standard
      deviations reflect a bimodal seed distribution
    (Appendix~\ref{app:robustness}).  The frozen-$W_V$ ablation holds $W_V$ at
  random initialization, training only the oscillator parameters. At a larger
  standard architecture ($d_{\rm model}{=}64$, 2 heads, 2 layers), both methods
  reach $98\%$.}
\label{tab:sva}
\end{table}

\subsubsection{Subject-verb agreement}
We turn next to a task of very different structure: syntactic agreement in
natural-language sentences. For subject-verb agreement (SVA), we train on a
synthetic Linzen-style dataset~\citep{TL-ED-YG:16} of 40K/4K/4K sentences (e.g.,
``The keys on the table are/is''), with subject, distractor, and verb indices
recorded for attention analysis.  We use synthetically generated sentences to
ensure precise control over subject, distractor, and verb positions; the goal is
attention analysis rather than linguistic benchmark performance.  At the
standard transformer architecture ($d_{\rm model}{=}64$, 2 heads, 2 layers),
both softmax and oscillator attention reach $98\%$ accuracy. The task is fully
within the capacity of standard transformers at this scale.

To examine oscillator attention at minimum hardware scale, we evaluate a
constrained configuration: $d_{\rm model}{=}32$, single attention head, single
transformer layer, feed-forward dimension $d_{\rm ff} = 64$. This setup uses
only $9$ free oscillators per attention layer at $\dosc{=}2$
  ($n_h \times T \times (\dosc-1) = 1 \times 9 \times 1$, with all hard
  sentences $T{=}9$ tokens). At this configuration we compare the
  mechanisms over 50 seeds each (Table~\ref{tab:sva}; per-seed distribution in
  Appendix~\ref{app:robustness}). Training failures, defined as hard accuracy
  below $85\%$, occur in $3/50$ oscillator runs versus $12/50$ softmax runs,
  a difference unlikely to arise by chance (two-sided $p = 0.023$). Hard accuracy is $95.76 \pm 5.37\%$ for
  the oscillator (median $97.14$) versus $90.77 \pm 9.28\%$ for softmax (median
  $95.49$). All fifteen failures, in both mechanisms, are partial collapses in
  the $69$--$85\%$ band: the mechanisms share a failure mode and differ in its
  rate, by a factor of four. Successful runs of the two mechanisms remain within
  $1$--$2$~pp of each other. A sensitivity sweep over learning rate and number
  of training epochs distinguishes the two failure behaviors: the oscillator's
  only weak cell (lr $2.5\!\times\!10^{-4}$, 20 epochs) is eliminated simply by
  training longer ($0/5$ failures at 40 epochs), while softmax's failures at lr
  $10^{-3}$ persist at both 20 and 40 epochs. To verify that the oscillator
dynamics drive this configuration's accuracy, we train the oscillator with $W_V$
frozen at random initialization (third row of Table~\ref{tab:sva}).  The
frozen-$W_V$ model reaches $94.96 \pm 6.43\%$ hard accuracy,
  indistinguishable from the full model ($-0.80$~pp): neither the failure
    counts nor the accuracy distributions differ significantly
    (Appendix~\ref{app:robustness}). A learned value transformation is unnecessary at this
  configuration, paralleling the KWS finding. As there, this ablation scopes the
  role of $W_V$; it does not by itself attribute the accuracy to the dynamics.

The SVA task also reveals a structural difference in attention
geometry. Table~\ref{tab:attention} shows verb-to-subject versus
verb-to-distractor attention weights aggregated across all hard validation sentences
($n{=}1153$, aggregated over 5 seeds). The subject-distractor gap is consistently positive for
oscillator across seeds and across correct/wrong predictions, showing that the
oscillator reliably prioritizes the subject position; the gap is in fact
slightly larger on wrong predictions ($+0.19$) than on correct ones
($+0.12$). These statistics are computed over the five training runs used in this
analysis, trained at the minimum-hardware configuration, which reach
approximately $97\%$ accuracy on hard sentences. The errors are structural: essentially all wrong predictions
involve the words \emph{fish} or \emph{deer} as the subject, number-invariant
nouns whose singular and plural forms are identical in English (\emph{one fish,
  two fish}; \emph{one deer, two deer}). The oscillator correctly attends to the
subject, but the subject's embedding carries no morphological information
distinguishing singular from plural. The resulting classification is driven to
near-zero logit magnitude ($\overline{|\ell|} = 0.92$ on wrong vs.\ $6.45$ on
correct), landing on the wrong side of the decision boundary on close
calls. Softmax's error structure differs by seed. The four softmax seeds that
train successfully ($\geq 94\%$ hard accuracy) show error patterns similar to
oscillator: gap $+0.23$ on wrong predictions, with $77\%$ of the
attention-maintained errors (verb-to-subject minus verb-to-distractor attention
$\geq 0.1$) having close-call logits ($|\ell| < 0.5$).  The catastrophic-failure
seed (one of the five training runs analyzed here,
  $78.14\%$ hard accuracy) shows true attention collapse: gap $-0.08$ on wrong
predictions, attention favoring the distractor over the subject. The five-seed
mean reported in Table~\ref{tab:attention} (subject focus $+0.35 \to +0.16$,
correct $\to$ wrong) reflects the catastrophic seed's $-0.08$ averaged against
the $+0.23$ of the four successful seeds.  When training succeeds, both mechanisms
fail on similar intrinsically ambiguous cases; the accuracy difference
at this configuration is driven by the difference in failure rates rather
than by a difference in error mechanism.

\begin{table}[t]
\centering
\begin{tabular}{llccc}
\toprule
Attention & Split & $a_{\rm vs}$ & $a_{\rm vd}$ & Gap \\
\midrule
Softmax    & correct & $0.59\pm0.16$ & $0.24\pm0.15$ & $+0.35$ \\
Softmax    & wrong   & $0.23\pm0.14$ & $0.06\pm0.06$ & $+0.16$ \\
Oscillator & correct & $0.20\pm0.11$ & $0.08\pm0.04$ & $+0.12$ \\
Oscillator & wrong   & $0.26\pm0.09$ & $0.07\pm0.04$ & $+0.19$ \\
\bottomrule
\end{tabular}
\caption{\textbf{Where the verb's attention goes, on correct and on
      wrong predictions.} At the minimum-hardware configuration we train five
    runs of each mechanism, differing only in the random
    seed. Each run contributes the mean over its own $1153$ hard sentences, and
    we average those five means. $a_{\rm vs}$ and $a_{\rm vd}$ quantify the
    attention the verb pays to the subject and to the distractor; Gap is their
    difference, and a positive gap means attention goes to the right word. On
    correct predictions both mechanisms have a positive gap. The oscillator
    keeps one on wrong predictions too ($+0.19$), so its errors are not caused
    by attention going to the wrong word. For softmax, four of the five runs
    behave the same way ($+0.23$), while the fifth, which failed to train
    ($78.14\%$ accuracy), reverses to $-0.08$.}
\label{tab:attention}
\end{table}

\begin{figure}[t]
\centering
\includegraphics[width=\linewidth]{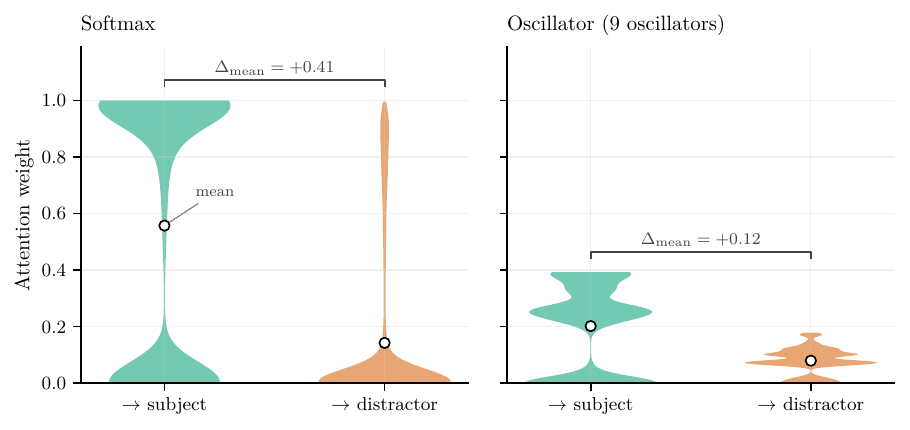}
\caption{\textbf{Verb attention distributions across all distractor test
    sentences ($n{=}2415$) at the minimum-hardware configuration, aggregated
    over 5 seeds.}  Softmax shows broad distributions reflecting high seed-level
  variability (one of the five training runs failed to converge above
    $80\%$; the displayed distribution combines stable and unstable runs). Oscillator
  shows tighter distributions with a consistent positive subject preference
  ($a_{\rm vs} > a_{\rm vd}$); over 50 seeds the oscillator's failure rate is
  lower ($3/50$ versus $12/50$).}
\label{fig:sva}
\end{figure}

Figure~\ref{fig:sva} shows verb attention distributions across all distractor
test sentences ($n{=}2415$).  Softmax distributions are broad due to high
between-seed variability, with the one training run of the five that
failed to converge pulling the
subject-attention distribution toward lower values. Oscillator distributions are
tighter and show a consistent subject preference, with anchor positions spread
broadly across the unit circle, indicating the mechanism uses the available
representational geometry rather than collapsing to a degenerate fixed point.

\subsection{Causal language modeling}\label{sec:lm}
We evaluate causal language modeling on two datasets: WikiText-2 (word-level,
vocabulary 10K, $T{=}50$, a standard LM benchmark) and
TinyStories~\citep{RE-YL:23} ($\sim$2M short stories, vocabulary 8K, simpler
narrative structure). The same transformer backbone is used throughout
($d_{\rm model}{=}128$, 4 heads, 2 layers), with context length $T{=}50$
for WikiText-2 and $T{=}128$ for TinyStories. Table~\ref{tab:lm} reports
validation perplexity at $\dosc \in \{2, 4, 8, 16, 32\}$; both datasets show the
oscillator--softmax gap shrinking as $\dosc$ grows. Every configuration
uses five seeds, including the softmax baselines.
On WikiText-2, the gap decreases from $+11.7$ PPL at $\dosc{=}2$ to $+3.6$
PPL at $\dosc{=}32$. On TinyStories, whose simpler narrative structure
demands less simultaneous disambiguation, the gap is smaller at every
$\dosc$ and closes faster, from $+2.24$ PPL at $\dosc{=}2$ to $+0.4$ PPL at
$\dosc{=}32$, approaching practical parity. Replacing the
analytic fixed point with the exact finite-horizon solution of the Lohe ODE at
$T_{\max}=30$ at inference recovers the same performance within $0.13$ PPL
(Section~\ref{sec:convergence}). Figure~\ref{fig:demo} illustrates the
oscillator dynamics on a TinyStories sentence: the active token's trajectory
converges toward its fixed point under learned coupling springs anchored to
context tokens.

\begin{table}[t]
\centering
\begin{tabular}{llccc}
\toprule
Dataset      & $\dosc$ & Oscillator PPL    & Softmax PPL & $\Delta$  \\
\midrule
WikiText-2   & $2$     & $110.67 \pm 0.51$ & $99.00 \pm 0.48$ & $+11.67$  \\
WikiText-2   & $4$     & $107.20 \pm 0.55$ & $99.00 \pm 0.48$ & $+8.20$   \\
WikiText-2   & $8$     & $105.23 \pm 0.32$ & $99.00 \pm 0.48$ & $+6.23$   \\
WikiText-2   & $16$    & $103.78 \pm 0.49$ & $99.00 \pm 0.48$ & $+4.78$   \\
WikiText-2   & $32$    & $102.56 \pm 0.37$ & $99.00 \pm 0.48$ & $+3.56$   \\
\midrule
TinyStories  & $2$     & $10.91 \pm 0.07$  & $8.67 \pm 0.04$  & $+2.24$   \\
TinyStories  & $4$     & $10.28 \pm 0.08$  & $8.67 \pm 0.04$  & $+1.61$   \\
TinyStories  & $8$     & $9.78 \pm 0.06$   & $8.67 \pm 0.04$  & $+1.11$   \\
TinyStories  & $16$    & $9.35 \pm 0.05$   & $8.67 \pm 0.04$  & $+0.68$   \\
TinyStories  & $32$    & $9.11 \pm 0.02$   & $8.67 \pm 0.04$  & $+0.44$   \\
\bottomrule
\end{tabular}
\caption{\textbf{Causal LM perplexity as a function of the oscillator
    dimension.} Validation perplexity on WikiText-2
  ($\dosc \in \{2, 4, 8, 16, 32\}$) and TinyStories
  ($\dosc \in \{2, 4, 8, 16, 32\}$), compared to a softmax baseline. All models use
  the same backbone ($d_{\rm model}{=}128$, 4 heads, 2 layers) and analytic
  fixed-point inference. The oscillator-versus-softmax gap closes
  monotonically as $\dosc$ grows. Mean and standard deviation across
  five seeds for every configuration, softmax baselines included; the
  held-out $\dosc{=}64$ configurations use ten seeds and are reported with
  the scaling analysis (Figure~\ref{fig:scaling}).}
\label{tab:lm}
\end{table}

\begin{figure}[t]
\centering
\includegraphics[width=\linewidth]{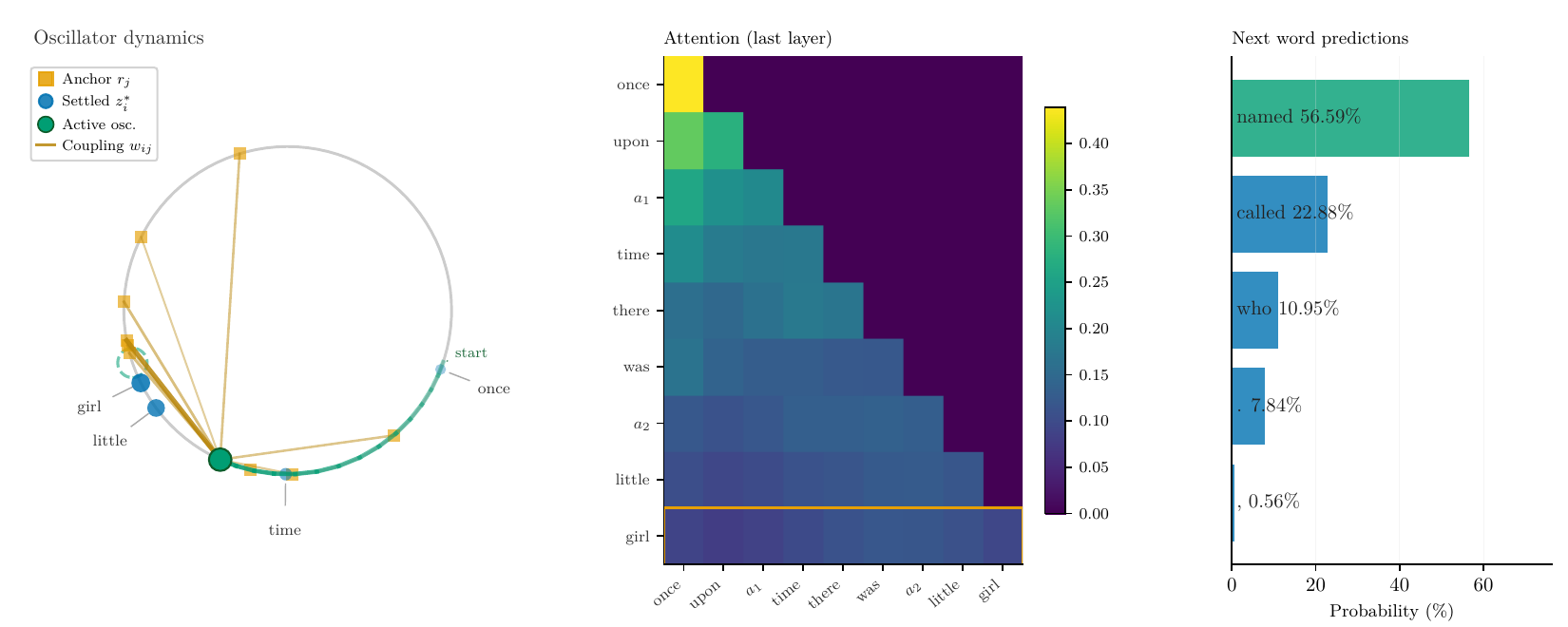}
\caption{\textbf{Oscillator dynamics on a TinyStories sentence.}  \textit{Left:}
  token oscillators on the unit circle; amber squares mark anchor positions
  $r_j$, dark markers show settled free-oscillator fixed points $z_i^*$, the
  active free oscillator $z_i$ traces a trajectory toward its fixed point
  $z_i^* = h_i/\|h_i\|$ along the trail, and lines indicate coupling weights
  $w_{ij}$.  \textit{Top right:} attention heatmap (last layer, head average).
  \textit{Bottom right:} next-word probability distribution; the prediction
  ``named'' receives $56.59\%$ probability.}
\label{fig:demo}
\end{figure}

The root cause of the oscillator-versus-softmax PPL gap is a dimensional
bottleneck. Both softmax and oscillator attention produce scalar attention
weights per token pair; the difference is the dimensionality available to
express the attention pattern across many pairs. Softmax computes
$\alpha_{ij}^{\text{sm}} \propto \exp\bigl((F e_i)^\top(G
e_j)/\sqrt{d_h}\bigr)$, so the attention pattern at position $i$ can vary along
all $d_h$ directions of the query-key product. Oscillator attention computes
$\alpha_{ij} \propto 1 + z_i^{*\top} r_j$ with the settled oscillator
constrained to the anchor span: $z_i^* = h_i / \norm{h_i}$ where
$h_i = \sum_l w_{il}\, r_l$. The attention pattern at position $i$ therefore
lives in the $\dosc$-dimensional manifold spanned by the anchors. When $\dosc$
is small, the attention matrix has bounded effective rank: rows
$\{\alpha_{ij}\}_j$ are tied to one another through the shared anchor geometry,
and rotations of $h_i$ that softmax could distinguish collapse to the same
oscillator state. As $\dosc$ grows, the anchor span enlarges and the attention
rows become more independent. On tasks where one dimension of similarity is
sufficient (KWS; SVA at the standard architecture), the dimensional constraint
is not binding and $\dosc{=}2$ achieves parity with softmax. At constrained capacity (SVA minimum-hardware configuration), the
sphere constraint is associated with a significantly lower training-failure
rate ($3/50$ versus $12/50$ seeds). On tasks requiring simultaneous
tracking of many contextual relationships (causal LM), higher $\dosc$ is
needed. These are different operating points: the low-dimensional regime
targets always-on classification, where $\dosc{=}2$ reaches parity, while
causal language modeling is a harder setting where the gap closes smoothly
with $\dosc$, at the rate the scaling relation quantifies. We verified that
the gap persists under three alternative interventions (learned
position-dependent phase offsets, per-head coupling amplification,
and independent scaling of attention heads and $d_{\rm model}$), none of which
close the gap independently of $\dosc$
(Appendix~\ref{app:alternative_ablations}). Appendix~\ref{app:alternative_ablations} supports this diagnosis
with measurements: the effective rank of the similarity operator
approaches but stays below its $\dosc{+}1$ ceiling on trained models,
truncating a trained softmax
model to comparable rank at inference is far more damaging than training
under the constraint, and a softmax variant trained with matched score rank
recovers part, but not all, of the difference. The rank ceiling constrains
small $\dosc$, and it is not the only constraint; the readout's selectivity
contributes as well (Section~\ref{sec:readout}). We conjecture that parity
is reached when $\dosc$ matches the effective rank of the softmax attention
matrix on the task, a question that we leave as the subject of future
investigation.

\begin{figure}[t]
\centering
\includegraphics[width=0.95\linewidth]{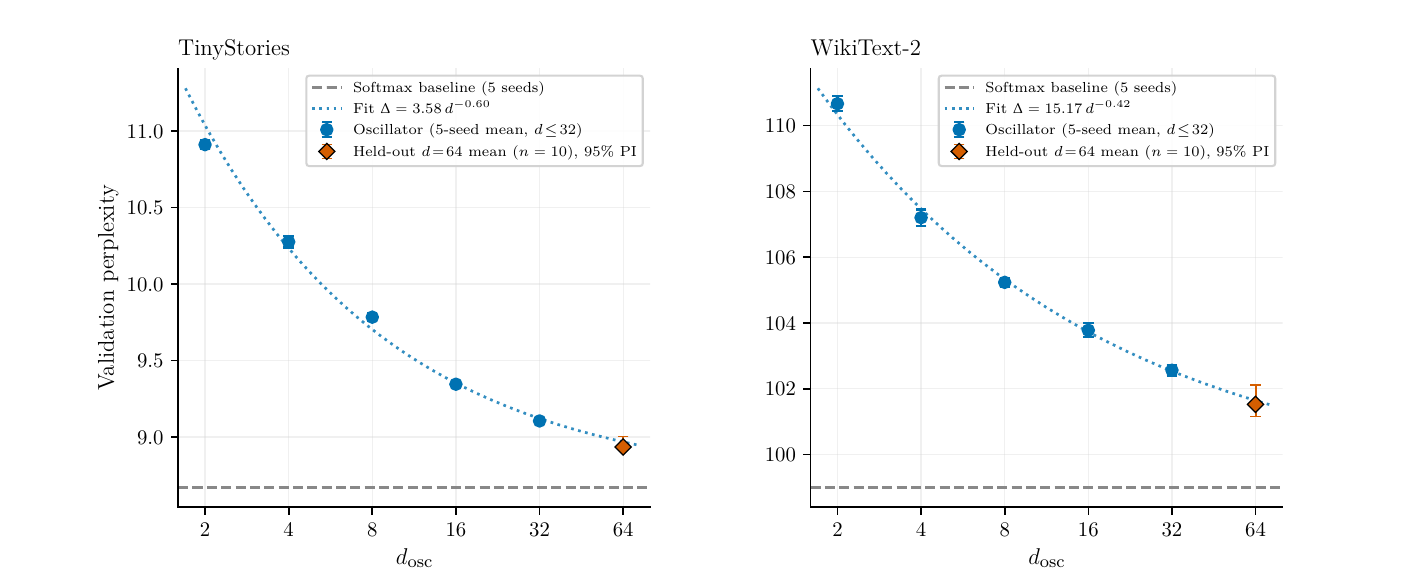}
\caption{\textbf{Perplexity gap vs.\ oscillator dimension, with a held-out
      prediction test.} Validation perplexity on TinyStories (left) and
    WikiText-2 (right). Circles: oscillator attention at
    $\dosc \in \{2, 4, 8, 16, 32\}$, five-seed means with whiskers giving the
      standard error across seeds; dashed horizontal line: softmax baseline (five
    seeds). Dotted curves: power-law fits $\Delta \approx C\cdot\dosc^{-\alpha}$
    to the $\dosc \le 32$ points, with $C \approx 3.58$, $\alpha \approx 0.60$
    on TinyStories and $C \approx 15.17$, $\alpha \approx 0.42$ on
    WikiText-2. Diamonds: held-out $\dosc{=}64$ configurations, excluded from
    the fits; each diamond is the mean of ten seeds, and its whisker spans the
    95\% prediction interval for that mean, combining fit uncertainty and
    observation noise.  Both held-out means fall inside their intervals.}
\label{fig:scaling}
\end{figure}

Figure~\ref{fig:scaling} shows that the perplexity gap on both datasets
  decays consistently with a power law. The fits over
  $\dosc \in \{2, 4, 8, 16, 32\}$ give $\Delta \approx 3.58 \cdot \dosc^{-0.60}$
  on TinyStories ($R^2 = 0.994$) and $\Delta \approx 15.17 \cdot \dosc^{-0.42}$
  on WikiText-2 ($R^2 = 0.997$); resampling the per-seed perplexities gives
    $95\%$ intervals for the exponents of $[0.572, 0.627]$ on TinyStories and
    $[0.386, 0.463]$ on WikiText-2. The study
  uses a uniform seed design: five seeds per configuration, baselines included,
  and ten seeds at each held-out test point. Models at $\dosc = 64$, a dimension
  excluded from the fits, are compared with each fit's extrapolation using a
  95\% prediction interval, which combines the uncertainty of the fit and the
  seed noise of the observation. On TinyStories the observed ten-seed mean gap
  is $0.264$ against a predicted $0.296$, inside the 95\% prediction interval
  $[0.260, 0.332]$. On WikiText-2 it is
  $2.523$ against a predicted $2.635$, inside $[2.159, 3.111]$. The fitted relation therefore predicts the held-out dimension on
  both datasets, within the stated intervals. We conjecture that the perplexity
gap obeys a scaling of the form $\Delta(\dosc) \;\approx\; C\,\dosc^{-\alpha}$
for task-dependent constants $C, \alpha > 0$.
A precise theoretical bridge between the two mechanisms is hindered by
the fact that softmax and oscillator attention compute attention weights through
structurally different operations (exponentiated similarity versus shifted
cosine on the sphere), so the apparent convergence in our data remains an empirical
observation.

\begin{table}[t]
\centering
\begin{tabular}{lccc}
\toprule
Task / $\dosc$                 & $p{=}1$          & $p{=}2$          & $p{=}4$          \\
\midrule
KWS ($\dosc{=}2$, acc \%)       & $88.79 \pm 0.29$ & $89.46 \pm 0.68$ & $90.02 \pm 0.73$ \\
TinyStories ($\dosc{=}2$, PPL)  & $10.91 \pm 0.07$ & $10.42 \pm 0.06$ & $10.17 \pm 0.11$ \\
TinyStories ($\dosc{=}8$, PPL)  & $9.78 \pm 0.06$  & $9.18 \pm 0.02$  & $8.98 \pm 0.07$  \\
\bottomrule
\end{tabular}
\caption{\textbf{Readout sharpening.}  First row is KWS accuracy (\%) at
  $\dosc{=}2$; remaining rows are TinyStories validation PPL at $\dosc{=}2$
  and $\dosc{=}8$. All entries use five seeds; the $p{=}1$ column reproduces the
  main-table values (Tables~\ref{tab:kws} and~\ref{tab:lm}). Sharpening improves
  performance over $p{=}1$, with the largest gains at $p{=}4$.}
\label{tab:readout}
\end{table}

\subsection{Readout sharpening}\label{sec:readout}
The readout exponent $p$ plays a role analogous to inverse temperature in
softmax: larger $p$ sharpens the attention distribution toward the strongest
cosine matches, while smaller $p$ flattens it toward a uniform distribution.  As
a software-side optimization, $p$ consistently improves performance across all
tasks tested (Table~\ref{tab:readout}). On KWS at $\dosc{=}2$, raising $p$ from
$1$ to $2$ gives $89.46 \pm 0.68\%$ versus $88.79 \pm 0.29\%$ at $p{=}1$
($+0.67$ pp), and $p{=}4$ reaches $90.02 \pm 0.73\%$ ($+1.23$
pp). These runs use the no-positional-encoding ablation, so they are
    compared within the oscillator mechanism and not against the softmax
    baseline.  On causal language modeling, raising $p$ from $1$ to $4$
improves perplexity by $0.74$ at $\dosc{=}2$ and $0.80$ at $\dosc{=}8$.
In both settings, a modestly sharpened readout improves task accuracy by
suppressing weak background alignments that would otherwise dilute the attention
signal. Sharpening is therefore a deployment-mode choice: $p{=}1$ is the
  readout the substrate performs directly, and $p{>}1$ is available when digital
  post-processing is acceptable.

Sharpening changes selectivity but not rank. On trained language models
  the normalized attention-row entropy is high for the oscillator (median above
  $0.90$, versus $0.60$ for softmax), and raising $p$ concentrates the attention
  mass. This is the low-entropy selectivity that \citet{hedgehog2024} identify
  as one of two properties softmax alternatives often lose. The shifted-cosine
  readout preserves monotonicity in the query-key score because the weight on
  token $j$ is monotone in $z_i^{*\top} r_j$. The rank ceiling is preserved: the
  $\dosc{+}1$ bound on the pre-mask similarity operator is fixed by the
  oscillator dimension (Appendix~\ref{app:alternative_ablations}) and does not
  depend on $p$.

\begin{table}[t]
\centering
\begin{tabular}{lcc}
\toprule
$\dosc$ & Converged (err $<0.01$) & App.\ antipodal \\
\midrule
2  & $84.3\%$ & $11.7\%$ \\
8  & $94.9\%$ & $3.4\%$  \\
32 & $98.3\%$ & $0.81\%$ \\
\bottomrule
\end{tabular}
\caption{\textbf{Convergence of the oscillator dynamics on TinyStories.}
    Trajectories integrated with RK45 to $T_{\max}=30$. Convergence improves
    with $\dosc$: trajectories that begin near the unstable equilibrium become
    rare, as predicted by Proposition~\ref{prop:antipodal}. The sample is 32
    validation sequences of 128 tokens, giving $2{,}895$
    token positions with one oscillator per position per head and five
    initializations each: $57{,}900$ trajectories per model.}
\label{tab:convergence}
\end{table}

\begin{figure}[t]
\centering
\includegraphics[width=\linewidth]{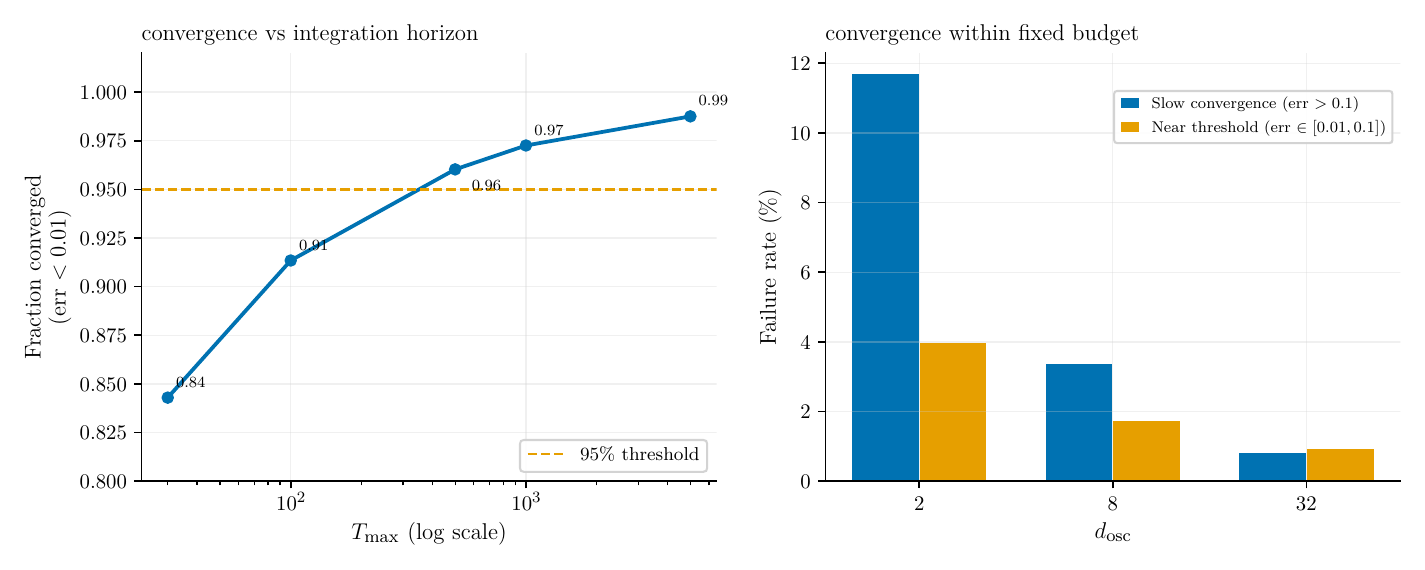}
\caption{\textbf{ODE convergence verification.}  \emph{Left:} At $\dosc{=}2$,
  the fraction of trajectories converging (err\,$<$\,0.01) grows with
  integration horizon $T_{\max}$, reaching $98.8\%$ at $T_{\max}{=}5000$; all
  trajectories eventually converge given sufficient time, consistent
  with Theorem~\ref{thm:uniqueness}.  \emph{Right:} At fixed budget
  $T_{\max}{=}30$, and over the same $57{,}900$ trajectories,
  convergence failure rates decrease strongly with $\dosc$. The two
    failure modes are distinct: an apparent-antipodal trajectory starts close to
    $-z_i^*$ and escapes slowly however strong the driving, whereas a
    small-$\|h_i\|$ trajectory has weak driving and so a shallow landscape
    everywhere. Both become exponentially rare with $\dosc$, consistent with
    Propositions~\ref{prop:degenerate} and~\ref{prop:antipodal}.}
\label{fig:convergence}
\end{figure}

\subsection{ODE convergence verification}\label{sec:convergence}
Theorem~\ref{thm:uniqueness} guarantees convergence asymptotically; here
we verify it numerically in finite time and characterize its dependence on
$\dosc$. We run the Lohe ODE using an adaptive RK45 integrator
(\texttt{scipy.solve\_ivp}, $rtol=10^{-4}$, $T_{\max}=30$) on the trained
TinyStories models and measure convergence error $\norm{z_{\rm ODE}^*-z_i^*}_2$
against the analytic fixed point. Convergence improves strongly with $\dosc$
(Table~\ref{tab:convergence}), consistent with
Proposition~\ref{prop:antipodal}. The ``apparent antipodal'' classification in
Table~\ref{tab:convergence} is a finite-time threshold determined by
$T_{\max}{=}30$ (trajectories that have not converged within the
integration window), not the measure-zero set of initial conditions lying
  exactly at the unstable equilibrium of Proposition~\ref{prop:antipodal}; the
empirical count is the more conservative quantity. Residual failures at each
$\dosc$ reflect initializations near the unstable equilibrium where convergence
within the fixed budget $T_{\max}{=}30$ is insufficient, not convergence to a
wrong attractor.  The left panel of Figure~\ref{fig:convergence} confirms that
all cases eventually converge given sufficient integration time: extending
$T_{\max}$ from $30$ to $5000$ for $\dosc{=}2$ recovers convergence
monotonically ($84.3\%\to91.3\%\to96.0\%\to97.3\%\to98.8\%$).  Sequential
initialization (starting each oscillator from the previous token's analytic
fixed point perturbed by Gaussian noise of scale $0.05$) reduces mean error
$4.6\times$ at $\dosc{=}2$ and costs nothing extra. At $\dosc{=}32$ the advantage
is negligible because random initialization already avoids slow-convergence with
high probability.

Beyond pointwise convergence, on the trained WikiText-2 $\dosc{=}2$ model, the
exact finite-horizon ODE solution recovers the analytic fixed-point perplexity
within $0.13$ PPL. Sequential initialization (starting each token's free oscillator
from the previous token's integrated state at $T_{\max}$) closes the residual
gap to $0.04$ PPL. The strong driving forces produced by the trained coupling
weights ensure near-perfect convergence within $T_{\max}{=}30$. The convergence
results are also robust to the integration scheme: RK45 agrees with the analytic
fixed point at the state and task-metric level, and forward Euler agrees except
at rare large-$\norm{h_i}$ tokens, where its fixed step becomes unstable
(Appendix~\ref{app:robustness}). Physical hardware running continuous-time
dynamics could close the gap further.

\section{Discussion}\label{sec:discussion}
On bidirectional tasks where the attention structure is simple (grouping voiced
frames in KWS, identifying a single grammatical dependency in SVA), scalar
Kuramoto dynamics are sufficient and achieve parity with softmax. The
  sphere constraint and bounded cosine readout are associated with a
  significantly lower training-failure rate at constrained capacity and with
  geometrically structured attention maps. On causal language modeling, where
each token must simultaneously track many independent contextual relationships,
the $\dosc$-dimensional anchor subspace constrains the rank of the attention
pattern; higher $\dosc$ relaxes this constraint, and the gap closes
consistently with a power law. Existing efforts to reduce the cost of
attention on digital hardware (linear attention~\citep{AK-AV-NP-FF:20},
Performer~\citep{KC-VL-DD-XS-AG-TS-PH-AM-LK-DB-LK-AW:21},
Linformer~\citep{SW-BLI-MK-HF-HM:20}, sparse and local attention
variants~\citep{RC-SG-AR-IS:19,IB-MEP-AC:20}, and relative position encodings
such as RoPE~\citep{JS-MA-YL-SP-WB-YL:24} and ALiBi~\citep{OP-NAS-ML:22})
reduce asymptotic cost or impose inductive biases but remain within the
softmax framework, addressing sequence length scaling rather than the energy
cost of the attention operation itself. The von Neumann memory hierarchy
remains the dominant energy sink at the small sequence lengths characteristic
of edge devices, where their asymptotic advantages have not yet
materialized. Our mechanism addresses a different axis: the goal is to
establish that a physically realizable attention mechanism can achieve
accuracy within a predictable and controllable range of softmax. The $\dosc$
scaling law provides the control: a system designer can choose $\dosc$ to
match the accuracy requirements of the task. The mechanism is also orthogonal
to algorithmic choices: sparsity patterns and positional encodings can be
combined with oscillator attention, since they modify which weights $\Wij$ are
nonzero rather than how attention weights are computed from token embeddings.

\paragraph{Computational cost.} Softmax attention and oscillator
  attention differ only in how the attention weights are formed, and the
  surrounding computations are the same for both mechanisms. Both end with a sum
  over all tokens, but softmax exponentiates every score before summing, and an
  exponentiated sum has no efficient physical analog, while the oscillator sums
  linear terms, which an array can do by summing currents or
  charge. Table~\ref{tab:cost} gives the reduction in operations for that step,
  over four implementations that trace how much of it is physical, from none to
  all.  On the two edge tasks the reduction grows at every step, from below a
  factor of two in software to about twelve when only the coupling nonlinearity
  is left digital. TinyStories is the exception at the top of the table: its
  readout dominates at $\dosc{=}8$, so the oscillator costs more than softmax in
  software and about the same once the equilibration is physical, and it joins
  the other two only once the readout moves into the array. Which of the four
  implementations a device permits is a property of that device.
  The counts assume ReLU coupling. The trained models use softplus, and
  Appendix~\ref{app:alternative_ablations} shows that the two perform the same;
  with softplus the reduction reaches only $1.2\times$
  (Table~\ref{tab:cost_softplus}). These are first-order counts; fused kernels,
  quantization and streaming formulations all change the arithmetic, and the
  comparison rests on the structural difference rather than the exact figures.
Our training and task evaluations compute the equilibrium from its closed
form~\eqref{eq:fixedpoint}. In an analog implementation the equilibration has no
operation count and its cost is time. The dynamics settle within a horizon of
about $5$ on the bidirectional tasks and $30$ on TinyStories
(Appendix~\ref{app:robustness}). That horizon is dimensionless, a property of
the dynamics rather than of any device. The equilibrium is reached by a physical
process, so its convergence time and its energy consumption depend on the
specific implementation and its physical constraints; for that reason we report
no absolute energy figures and compare only the arithmetic a digital
implementation must perform, and the tolerances a physical realization would
need are projected in Appendix~\ref{app:robustness}.

\begin{table}[t]
\centering
\begin{tabular}{lccc}
\toprule
Reduction in attention-layer operations relative to softmax & KWS & SVA & TinyStories \\
\midrule
Softmax (reference)                            & $1.0$  & $1.0$  & $1.0$  \\
\midrule
Oscillator, all digital                        & $1.7$  & $1.6$  & $0.6$  \\
Oscillator, equilibration physical             & $2.4$  & $2.4$  & $1.1$  \\
Oscillator, equilibration and readout physical & $4.0$  & $4.1$  & $4.0$  \\
Oscillator, affine sum and division physical   & $12.0$ & $11.9$ & $12.0$ \\
\bottomrule
\end{tabular}
\caption{\textbf{Reduction in attention-layer operations relative to
      softmax.} Each row is an implementation of the step where the two
    mechanisms differ, and the figure is the factor by which it reduces the
    operation count. The different implementations move successively more of the
    step into the array: none, then the equilibration, then the inner products
    of the readout, then the affine sum and division. The last leaves only the
    coupling nonlinearity digital, so its factor approaches the weighting given
    to an exponential plus the two normalization terms the oscillator no longer
    pays, which is why all three tasks land near the same figure. Two
    assumptions enter: the counts assume ReLU coupling, and an exponential is
    counted as equivalent to ten multiply-adds with every other operation as
    one, following \citet{MH:14} and \citet{HP-YP:26}. Everything around the
    step is identical for both mechanisms and is not counted, nor is the cost of
    writing the couplings into the array, which depends on the
    device. Configurations: KWS ($T{=}49$, $n_h{=}2$, $\dosc{=}2$), SVA
    ($T{=}9$, $n_h{=}1$, $\dosc{=}2$), TinyStories ($T{=}128$, $n_h{=}4$,
    $\dosc{=}8$, causal). The same table with softplus coupling is
    Table~\ref{tab:cost_softplus}; absolute counts and the counting convention
    are in Appendix~\ref{app:details}.}
\label{tab:cost}
\end{table}

At $\dosc{=}2$, oscillator attention requires $T$ scalar oscillators per head,
  i.e.\ $n_h \times T \times (\dosc-1)$; this counts the active free
  oscillators, since the anchors are fixed forcing points set by learned
  parameters rather than evolving elements. This is $98$ total for the KWS
setting (2 heads, $T{=}49$). At $\dosc{=}32$, approximately $6{,}200$
  oscillators are needed for the WikiText-2 setting ($n_h{=}4$,
  $T{=}50$). These phase-oscillator counts assume substrates with native
$\dosc$-dimensional parameters. At $\dosc{=}2$, hardware realizations are
well-developed across multiple substrates: mechanical oscillators, Josephson
junctions, MEMS resonators, and electrical LC tanks. For $\dosc > 2$, oscillator
hardware is less mature: some substrates with intrinsic multi-dimensional order
(e.g., vector laser modes, three-dimensional spin systems) appear in principle
compatible with Lohe dynamics on $\mathbb{S}^{\dosc-1}$, but established
hardware demonstrations at higher $\dosc$ remain open. The $\dosc{=}2$ case is
therefore the immediate hardware-deployment target; higher $\dosc$ is realized
in software in this work and is an open hardware question. Across these
substrates, energy per oscillator is bounded by device physics, so the energy
advantage scales with the oscillator count. The fitted scaling
  $\Delta(\dosc) \approx C \cdot \dosc^{-\alpha}$ provides a
  task-complexity-to-oscillator-budget design rule for whichever substrate is
  used. Other physical-computing approaches for attention exist but are tied to
specific substrates: spiking
transformers~\citep{RJZ-QZ-JKE:23,CL-TL-JX-CG-ZL-CZ-XZ-XH:23,MY-JH-ZZ-LY-YT-BX-GL:23}
achieve 2--8$\times$ energy reduction but retain the switching costs of
transistor circuits; photonic transformer accelerators~\citep{zhu2024lightening}
use optical interference for dynamic matrix products in attention but still
require digital softmax and optical-to-electrical conversion between layers;
memristive accelerators implement in-memory matrix multiplications but incur ADC
overhead at every layer. To our knowledge, no existing physical-computing
approach for attention is substrate-independent.

The mechanism has a dynamical-systems lineage. The Lohe model~\citep{MAL:09}
generalizes Kuramoto to $\mathbb{S}^{\dosc-1}$; its synchronization has been
mostly studied for symmetric all-to-all coupling (see
e.g.~\citep{SC-MG-EO:19}). When the dynamics admit a gradient-flow structure
with a Lyapunov function, multi-agent consensus on $\mathbb{S}^n$ ($n \geq 2$)
achieves almost global convergence on any connected
graph~\citep{ROS:06-swarms,markdahl2018almost,lipton2021kuramoto}, a property
that fails on the circle ($n=1$) and is generally lost for non-gradient dynamics
(heterogeneous frequencies, asymmetric or matrix-weighted couplings), where
multistability is the norm. The fixed-query asymmetric setting (where anchors
are external forcing terms rather than free participants) inherits the
gradient-flow structure in its simplest form: a single oscillator under linear
forcing $h_i$, with Lyapunov function $V(z_i) = -z_i^\top h_i$ on
$\mathbb{S}^{\dosc-1}$. This admits a unique global minimum on the sphere,
yielding the uniqueness guarantee of Theorem~\ref{thm:uniqueness}. Earlier work
has used oscillator networks for pattern recognition via phase-locking in PLL
circuits~\citep{FCH-EMI:00b} and has analyzed the broader control-theoretic
structure of coupled-oscillator
networks~\citep{TM-GB-DSB-FP:22,YQ-AMN-DSB-FP:23}, but not as a transformer
attention primitive. Kuramoto oscillators have also been applied to associative
memory~\citep{TG-AO-ARV-MRS-FB-FP:26}, combinatorial optimization on CDW
arrays~\citep{JOB-TG-FP-AAB:25}, and, in AKOrN~\citep{TM-SL-AG-MW:25}, as
  neuron activations that bind features within the layers of a digitally
  executed network. AKOrN and our mechanism are complementary in what the
  oscillators compute, how they interact, and where the computation
  runs. AKOrN's oscillators interact through learned mutual coupling and
    run inside a digital forward pass. Our free oscillators do not couple to one
    another; they align independently to fixed anchor forcing, which is what
    yields the closed-form fixed point and the convergence guarantee. We
  replace the attention normalization itself with the equilibrium of this
  entrainment process and target execution on physical substrates. In
concurrent work, \citet{zhou2026emergence} also connect transformer attention
with Kuramoto dynamics, but in the opposite direction: they introduce a
dynamical temporal attention that modulates the emergent coherence of a
synchronizing oscillator network, whereas fixed-query oscillator attention uses
oscillator equilibration to implement the attention operation itself. Their
kernel attends over past states and retains softmax normalization, while ours
eliminates the global exponential normalization through the shifted-cosine
readout; the two works are best read as complementary explorations of the
attention--synchronization correspondence, for understanding synchronization
dynamics and for designing physically realizable attention mechanisms,
respectively.

A cluster of recent work couples attention to oscillator dynamics, along
  axes distinct from ours. Earlier, \citet{shende2026oscillators} replace the
  Neural-ODE solver of a continuous-time transformer with a linear
  damped-harmonic-oscillator analogy that admits a closed-form solution,
  modeling keys and values as damped driven oscillators and reading attention
  from a resonance integral. That line targets irregular-time-series modeling on
  conventional digital hardware and makes no substrate-independence or
  physical-realization claim. Concurrently with our submission,
  \citet{nunley2026kuramoto} places attention on the torus, updating each token
  by the tangent component of an attention-weighted circular mean, an adaptive
  Kuramoto coupling with rotary natural frequencies, while retaining a softmax
  score map. A follow-up~\citep{nunley2026frustrated} adds Kuramoto--Sakaguchi
  frustration to improve long-range copying. Both study synchronization
  attention as a software architecture evaluated on character-level language
  modeling. Separately, \citet{xiao2026kope} use Kuramoto synchronization as a
  neuro-inspired phase-encoding module to improve the learning efficiency of
  otherwise standard digital networks, enhancing attention rather than replacing
  its normalization and, like AKOrN, making no physical-substrate claim. Our aim
  is orthogonal: a substrate-independent mechanism whose only global operation
  is an affine readout, with the uniqueness and global-stability guarantee of
  Theorem~\ref{thm:uniqueness} transferring to every physical realization of the
  flow. That oscillator equilibration can implement attention is a finding
  shared across these efforts; the distinctions are the state space, whether the
  exponential normalization is removed, and whether the target is physical
  execution or a digital model.

Beyond engineered hardware, the framework connects to theories of
biological neural computation. The binding-by-synchrony
hypothesis~\citep{MB-SH-AD:10,PF:05} proposes that cortical
oscillations coordinate distributed representations through phase
alignment, with stable phase relationships encoding which features
belong together. The fixed-query mechanism is a formalization of
this idea: query anchors play the role of sustained reference
oscillations (analogous to gamma-band templates in cortical
attention), and free oscillators play the role of stimulus-driven
populations that phase-lock selectively to those references. The
positive-coupling constraint that guarantees convergence parallels
the predominantly excitatory long-range projections in cortex.
Whether this analogy can be made precise enough to inform
biologically plausible learning architectures (in which the
coupling weights $\Wij$ are updated by a Hebbian or
spike-timing-dependent plasticity rule rather than backpropagation)
is an open question we regard as promising.

Within the deep-learning literature, the most directly comparable framework is
the modern Hopfield
network~\citep{JJH:82a,HR-BS-JL-PS-MW-TA-LG-MH-DK-MK-GK-JB-SH:21}: both
mechanisms produce attention as fixed-point retrieval from a continuous
energy-based system over a finite set of reference patterns. The two differ in
both the structure of the energy landscape and the form of retrieval. Modern
Hopfield retrieval is an analytic one-step update (a softmax over similarities
to stored patterns), corresponding to a free-energy landscape that admits
multiple attractors (one per stored pattern), with retrieval converging to the
basin containing the query. Oscillator attention's energy landscape is sharper:
Theorem~\ref{thm:uniqueness} guarantees that the fixed-query gradient flow on
$\mathbb{S}^{\dosc-1}$ has a single stable attractor, so retrieval converges to
a unique fixed point regardless of initialization. The retrieval itself is
realized by physical continuous-time dynamics rather than a discrete software
step: trajectories trace an explicit gradient flow whose convergence rate is set
by the substrate's physical time constant, not by a floating-point
computation. This combination, an energy landscape collapsed to a single
attractor and retrieval realized as a physical gradient flow, makes oscillator
attention substrate-realizable while preserving the fixed-point-retrieval
semantics the Hopfield literature established as a useful frame for
attention. Fixed query anchors are also related to the prototype or codebook
attention of Set Transformers~\citep{JL-YL-JK-AK-SC-YWT:19}, where a fixed set
of learned seed vectors attends to the input. In Set Transformers the seeds are
free parameters updated by standard softmax attention, while our anchors are
fixed during inference and the free oscillators converge to them via physical
dynamics, with convergence guaranteed by Theorem~\ref{thm:uniqueness}. Taken
together, this positions oscillator attention as a candidate primitive for
physical computing: a mechanism with clean theoretical guarantees, predictable
scaling behavior, and a path to hardware realization that does not depend on
approximating digital arithmetic in analog devices.

Several directions remain open. Our language-model experiments use small
  transformers; scaling fixed-query oscillator attention to standard transformer
  sizes, and evaluating it on additional language-modeling datasets beyond the
  two studied here, would test whether the scaling behavior reported above
  persists.  On the hardware side, our robustness study covers the
  non-idealities we could simulate, but a full physical realization will face
  effects we do not model, e.g., signal-propagation delays across the coupling
  network and the cost of programming the anchor couplings, whose quantitative
  study we leave to a hardware-focused follow-up.


\section{Conclusion}
Fixed-query oscillator attention replaces the global exponential normalization and in-loop
reduction of softmax with the physical equilibration of a coupled oscillator
network on $\mathbb{S}^{\dosc-1}$. The fixed point is provably unique and
globally attractive from almost every initial condition whenever the weighted
anchor sum is nonzero, with degenerate cases vanishing exponentially in
$\dosc$. At $\dosc{=}2$, the mechanism matches softmax on KWS under matched
positional information, and on SVA at the minimum-hardware configuration it
fails to train significantly less often ($3/50$ versus $12/50$ seeds). On
causal language modeling, the gap closes consistently with a power law in
$\dosc$, and the fitted relation predicts a held-out $\dosc{=}64$
configuration on both datasets.

The broader claim is substrate independence. The Kuramoto synchronization
phenomenon arises naturally in electrical circuits, mechanical oscillators,
superconducting junctions, quantum materials, and neural tissue. Wherever it
arises, it can compute attention: equilibration and attention are described by
the same mathematics, so no engineering effort is spent approximating a digital
algorithm in analog hardware. This is what we mean by physical intelligence: a
computation that is a property of a
dynamical class rather than a specific device. Softmax remains the right choice
for digital hardware. For any physical substrate where Kuramoto-Lohe dynamics
can be realized, oscillator attention provides a principled alternative with
theoretical guarantees and characterized scaling behavior. The specific energy
and fabrication tradeoffs across different substrates are subjects for further
work.

\subsubsection*{Acknowledgments}
This material is based upon work supported in part by ARO award
W911NF-24-1-0228.

\bibliographystyle{unsrtnat}
\bibliography{./bib/paper-refs}

\appendix

  \section{Derivation of the Kuramoto equation from the Lohe model at
    $\dosc=2$}\label{app:kuramoto-derivation}
For completeness, we show how the scalar Kuramoto equation arises as the
$\dosc=2$ case of the Lohe model~\eqref{eq:lohe}. The derivation is short: a
single phase angle $\theta_i \in [0, 2\pi)$ fully captures the dynamics of a
unit vector $x_i \in \mathbb{S}^1$, because all the tangent motion projects
onto one direction.

Let $\dosc = 2$ and parametrize each oscillator by its phase angle:
\[
  x_i = (\cos\theta_i,\, \sin\theta_i) \in \mathbb{S}^1, \qquad
  \theta_i \in [0, 2\pi).
\]
The tangent space to $\mathbb{S}^1$ at $x_i$ is one-dimensional and spanned by
\[
  \tau_i := (-\sin\theta_i,\, \cos\theta_i) = \frac{\partial x_i}{\partial \theta_i}.
\]
By the chain rule, the time derivative of $x_i$ is
\[
  \dot x_i
  = \frac{\partial x_i}{\partial \theta_i}\,\frac{\mathrm{d}\theta_i}{\mathrm{d}t}
  = \tau_i\,\dot\theta_i,
\]
which lies along the tangent direction. The projector $(I - x_i x_i^\top)$
restricts any vector to the line spanned by $\tau_i$:
\[
  (I - x_i x_i^\top)\, v
  = (\tau_i^\top v)\, \tau_i
  \qquad \text{for any } v \in \mathbb{R}^2.
\]

Apply this to the right-hand side of~\eqref{eq:lohe} with
$\Omega_i = \omega_i J$ where
$J = \begin{pmatrix} 0 & -1 \\ 1 & 0 \end{pmatrix}$ is the $90^\circ$
rotation. The first term satisfies $\Omega_i x_i = \omega_i \tau_i$. The second
term projects the coupling onto the tangent:
\[
  (I - x_i x_i^\top) \sum_j w_{ij} x_j
  = \sum_j w_{ij}\, (\tau_i^\top x_j)\, \tau_i.
\]
Using $\tau_i^\top x_j = -\sin\theta_i \cos\theta_j + \cos\theta_i \sin\theta_j
= \sin(\theta_j - \theta_i)$,
the full Lohe equation~\eqref{eq:lohe} becomes
\[
  \dot\theta_i\, \tau_i
  = \omega_i\, \tau_i
  + \Big(\sum_j w_{ij} \sin(\theta_j - \theta_i)\Big)\, \tau_i.
\]
Both sides are scalar multiples of $\tau_i$, so the equation reduces to its
scalar coefficient:
\[
  \dot\theta_i = \omega_i + \sum_j w_{ij} \sin(\theta_j - \theta_i),
\]
which is the classical Kuramoto model~\citep{YK:75,SHS:00}. On $\mathbb{S}^1$,
the projector $(I - x_i x_i^\top)$ restricts motion to the tangent line at
$x_i$, which is one-dimensional; the dynamics therefore reduce to a single
scalar ODE for the phase angle $\theta_i$. The same reduction does not occur in
higher dimensions: for $\dosc \geq 3$ the tangent space to
$\mathbb{S}^{\dosc-1}$ at $x_i$ is $(\dosc-1)$-dimensional, and no scalar
parametrization captures the full state.


\section{Experimental details}
\label{app:details}

This appendix provides the hyperparameters, training procedures, and
infrastructure needed to reproduce the experiments. Throughout, $d_{\rm model}$
is the token embedding width, $n_h$ the number of attention heads, $n_\ell$ the
number of transformer layers, and $d_{\rm ff}$ the feed-forward width of the
transformer block.

\noindent
\textbf{Training and inference modes.} All models are trained with the analytic
fixed point $z_i^* = h_i / \norm{h_i}$ as the inference computation, with
gradients flowing through the normalization via PyTorch autograd. The
normalization is implemented with PyTorch's \texttt{F.normalize(h, dim=-1,
  eps=$10^{-8}$)}, which lower-bounds $\norm{h_i}$ by $\varepsilon = 10^{-8}$ in
the denominator to avoid the numerical singularity at $\norm{h_i}=0$; on the
measure-zero degenerate set bounded by Proposition~\ref{prop:degenerate}, the
resulting attention is approximately uniform. The analytic fixed point and the
equilibration trajectory of the ODE~\eqref{eq:fixedquery} converge to the same
point under the hypotheses of Theorem~\ref{thm:uniqueness}, so no algorithmic
gap is introduced between training and hardware inference. ODE inference at test
time (the RK45 verification reported at the end of
Section~\ref{sec:convergence}) uses \texttt{scipy.integrate.solve\_ivp} with
\texttt{method='RK45'}, $rtol=atol=10^{-4}$, and $T_{\max}=30$. The KWS
frozen-$W_V$ ablation and SVA frozen-$W_V$ ablation (Tables~\ref{tab:kws}
and~\ref{tab:sva}) use the same analytic fixed-point inference
($z_i^* = h_i/\|h_i\|$) as all other experiments.

\noindent
\textbf{Cost accounting (Tables~\ref{tab:cost}
    and~\ref{tab:cost_softplus}).} Counts cover the step where the mechanisms
  differ, per inference and per attention layer. Per-pair terms use the pairs
  actually evaluated, $P = T^2$ for the two bidirectional tasks and
  $P = T(T{+}1)/2$ for causal TinyStories. Per head: one nonlinearity per pair;
  the row sum costs one addition per term summed, $P - T$; the division one per
  weight, $P$; the readout and the fixed point $\dosc$ products per pair each;
  and the normalization of $h_i$ onto the sphere $2\dosc$ operations per
  query. An exponential is counted as ten multiply-add equivalents and every
  other operation as one.

\noindent
\textbf{Antipodal validation (Figure~\ref{fig:antipodal}).} For each
$\dosc \in \{2, 4, 8, 16, 32, 64\}$, $N=1000$ samples are drawn uniformly from
$\mathbb{S}^{\dosc-1}$ by sampling $z \sim \mathcal{N}(0, I_{\dosc})$ and
normalizing $z/\norm{z}$.  The empirical fraction within angular distance
$\alpha$ of a fixed pole is computed for $\alpha$ on a uniform grid of $50$
points in $[0.01, \pi/2]$. The closed-form prediction is computed via
\texttt{scipy.integrate.quad} on the integrand $\sin^{\dosc-2}(\theta)$ on
$[0, \alpha]$, multiplied by the prefactor
$\Gamma(\dosc/2)/(\sqrt{\pi}\,\Gamma((\dosc-1)/2))$ from \eqref{eq:capprob}.

\noindent
\textbf{KWS (Tables~\ref{tab:kws}, \ref{tab:readout}).}  $d_{\rm model}=32$,
$n_h=2$, $n_\ell=1$, $d_{\rm ff}=128$. Input is log-mel spectrograms with 40
bins, 25\,ms windows with 10\,ms hop, $T=49$ frames per utterance.
The position-matched comparison of Figure~\ref{fig:kws} (left) trains both
mechanisms under three positional-encoding conditions (none, sinusoidal,
learned absolute), five seeds per condition, identical optimizer settings.
In the no-positional-encoding condition the anchors still carry positional information, since
they are parameterized per absolute position; the frame-permutation probe of
Section~\ref{sec:bidirectional} quantifies this. The ablation rows of
Table~\ref{tab:kws} use the no-positional-encoding ablation condition. Training: AdamW optimizer with weight decay $10^{-4}$,
$lr=10^{-3}$, batch 64, 30 epochs, cosine learning-rate schedule, gradient
clipping at 1.0. The full-model, softmax-baseline, frozen-$W_V$, and
$p$-ablation use $5$~seeds.

\textit{Readout sharpening ablation (Table~\ref{tab:readout}):} same
architecture as the main KWS experiment ($d_{\rm model}{=}32$, $n_h{=}2$,
$n_\ell{=}1$), $\dosc{=}2$ throughout. Each value of $p \in \{1, 2, 4\}$ is
trained with 5 seeds. Identical training setup to the main KWS runs
except for the readout exponent.

\textit{Coupling-function ablation:} softplus is replaced by
  $\mathrm{relu}(x)+10^{-3}$ and by $\mathrm{elu}(x)+1$ at identical
  configurations. On KWS the three couplings are indistinguishable
  ($88.76$--$88.77\%$, 5 seeds each). On TinyStories (5 seeds each),
  $\mathrm{elu}(x)+1$ matches softplus ($9.782 \pm 0.058$ versus
  $9.784 \pm 0.060$), and the exponential-free $\mathrm{relu}(x)+10^{-3}$
  ($9.856 \pm 0.084$) differs by only $+0.07$ perplexity, within the
  between-seed spread (pooled std $0.10$). The
  softplus exponential is inessential.

\noindent
\textbf{SVA (Tables~\ref{tab:sva}, \ref{tab:attention}).}  Synthetic
Linzen-style sentences~\citep{TL-ED-YG:16} of the form ``\textit{The keys on the
  table are/is}'' with 40K training, 4K validation, and 4K test examples;
subject, distractor, and verb positions are recorded for attention
analysis. Sinusoidal positional encoding~\citep{AV-NS-NP-JU-LJ-AG-LK-IP:17}
added to the embeddings before the first attention layer. Training: AdamW with
weight decay $10^{-4}$, $lr=5\times10^{-4}$, batch 64, 20 epochs.

\textit{Standard architecture:}
$d_{\rm model}=64$, $n_h=2$, $n_\ell=2$, $d_{\rm ff}=256$. Both
softmax and oscillator reach $98\%$ overall (5 seeds each). At this
scale the task is fully within transformer capacity and the comparison
does not stress the attention mechanism.

\textit{Minimum-hardware configuration (Table~\ref{tab:sva}):}
$d_{\rm model}=32$, $n_h=1$, $n_\ell=1$, $d_{\rm ff}=64$. This is the smallest
configuration tested that reveals a statistically meaningful difference. The
main comparison uses 50 seeds per mechanism. Training failure counts are
compared with Fisher's exact test (two-sided) on the numbers of successful and
failed runs per mechanism, and hard-accuracy distributions with the
Mann--Whitney $U$ test. The sensitivity sweep varies learning rate
($\{2.5\!\times\!10^{-4}, 5\!\times\!10^{-4}, 10^{-3}\}$) and training epochs
($\{20, 40\}$) with 5 seeds per cell. The frozen-$W_V$ ablation uses 50
seeds. Total oscillators per attention layer:
$n_h \times T \times (\dosc-1) = 1 \times 9 \times 1 = 9$ (all hard sentences
are $T{=}9$ tokens).

\textit{Architectural robustness sweep:} To verify that the minimum-hardware
result is not fragile to the specific $d_{\rm ff}$ choice, we trained
models across a range of $d_{\rm ff}$ values (5 seeds each) while
holding all other hyperparameters fixed.  Table~\ref{tab:sva_sweep} reports hard-sentence
accuracy.  At $d_{\rm ff}=72$, the oscillator is stable ($96.76 \pm 0.46\%$) while
softmax shows bimodal outcomes ($91.07 \pm 10.03\%$), a $+5.69$ pp advantage.  At
$d_{\rm ff}\leq 56$, both models encounter bimodal training outcomes within the
20-epoch horizon, consistent with the measure-zero initialization risk near the
unstable equilibrium at severely constrained capacity.

\begin{table}[t]
\centering
\begin{tabular}{cccc}
\toprule
$d_{\rm ff}$ & Softmax hard (\%) & Oscillator hard (\%) & $\Delta$ \\
\midrule
72 & $91.07 \pm 10.03$ & $96.76 \pm 0.46$ & $+5.69$ pp \\
64 & $92.11 \pm 7.87$ & $97.38 \pm 0.33$ & $+5.27$ pp \\
56 & $83.64 \pm 9.69$ & $92.42 \pm 9.85$ & $+8.78$ pp \\
32 & $92.28 \pm 11.17$ & $94.41 \pm 4.40$ & $+2.13$ pp \\
16 & $95.32 \pm 2.14$ & $92.23 \pm 10.65$ & $-3.09$ pp \\
\bottomrule
\end{tabular}
\caption{\textbf{Does the minimum-hardware result depend on the
      feed-forward width?} We vary $d_{\rm ff}$, the feed-forward width of the
    transformer block, holding everything else at the minimum-hardware
    configuration ($d_{\rm model}=32$, $n_h=1$, $n_\ell=1$, $\dosc=2$) and
    training five seeds at each width. The two columns are hard-sentence
    accuracy for each mechanism. At $d_{\rm ff}=72$ the oscillator is stable
    while one softmax seed collapses to $73.2\%$. A collapse of this kind shows
    in the standard deviation rather than the mean: every collapsed run falls
    below $87\%$ and every other run stays above $91\%$, so a split spreads the
    seeds instead of shifting them together. At the narrower $d_{\rm ff}$ widths
    both mechanisms collapse on some seeds, so neither trains reliably once the
    block is this narrow and the comparison there says little. The oscillator's
    mean is higher at every width except the narrowest, $d_{\rm ff}=16$, where
    it is $3.09$~pp behind.}
\label{tab:sva_sweep}
\end{table}

\noindent
\textbf{WikiText-2 (Table~\ref{tab:lm}).} $d_{\rm model}=128$, $n_h=4$,
$n_\ell=2$, $d_{\rm ff}=512$, causal masking, sinusoidal positional
encoding. Word-level tokenization with vocabulary size 10K. Training: AdamW with
weight decay $10^{-4}$, $lr=5\times10^{-4}$, batch 64, 30
epochs. Analytic-fixed-point training uses 5 seeds at each of
$\dosc \in \{2, 4, 8, 16, 32\}$. The $\dosc=2$ trained model is evaluated under
three inference modes: analytic, and finite-horizon integration from random and
from sequential initialization. The random mode uses 5 initialization seeds. At
$\dosc{=}2$ the scalar Lohe ODE admits a closed-form solution, which we evaluate
at $T_{\max}=30$ rather than integrating.

\noindent
\textbf{TinyStories (Table~\ref{tab:lm}).} Same backbone as
WikiText-2 ($d_{\rm model}{=}128$, 4 heads, 2 layers), with context
length $T{=}128$ rather than $50$. Word-level tokenization with vocabulary size 8K. Training: AdamW
with weight decay $10^{-4}$, $lr=5\times10^{-4}$, batch 256, 5 epochs. All five
$\dosc \in \{2, 4, 8, 16, 32\}$ values use 5 seeds.

\textit{Readout sharpening ablation (Table~\ref{tab:readout}):} TinyStories with
same architecture as in Table~\ref{tab:lm}, varying $\dosc$ and $p$. The
$\dosc{=}8$ and $\dosc{=}2$ rows use 5 seeds.

\noindent
\textbf{ODE convergence verification (Table~\ref{tab:convergence},
  Figure~\ref{fig:convergence}).} RK45 via \texttt{scipy.solve\_ivp},
$rtol=atol=10^{-4}$, $T_{\max}=30$, evaluated for each trained TinyStories model
on 32 validation sequences of 128 tokens ($2{,}895$ token positions $\times\,4$
heads $= 11{,}580$ oscillators) with 5 random initializations of $z(0)$ per
oscillator.
Convergence is measured by the final-time error
$\mathrm{err}_i = \norm{z_{\rm ODE}(T_{\max}) - z_i^*}_2$ between the integrated
trajectory and the analytic fixed point.  The ``Converged'' column reports
trajectories with $\mathrm{err}_i < 0.01$; ``apparent-antipodal'' trajectories
have $\mathrm{err}_i > 0.1$ (slow convergence consistent with initialization
near $-z_i^*$).

\noindent
\textbf{Compute.} All experiments were run on a Mac Studio (M4 Max) using
PyTorch with the MPS backend. Antipodal validation and ODE convergence
verification run on CPU.

\textbf{Code and reproducibility.} The repository provides the model
implementation, the training and analysis code, the experiment configurations,
and the result files from which the reported tables and figures are generated.
Trained checkpoints are not distributed; analyses that rely on them can be
reproduced by~retraining.

\section{Physical robustness, settling, and attention
statistics}\label{app:robustness}

This appendix reports simulation measurements of the physical
non-idealities that a hardware realization must tolerate, together with
attention statistics used in the main text. All measurements use trained
checkpoints.

\textbf{Perturbation models.} Coupling mismatch is multiplicative: each coupling
is scaled by $e^{\epsilon}$ with $\epsilon \sim N(0, s^2)$, so at $s = 0.5$
individual couplings are off by tens of percent. State noise adds a perturbation
of scale $s$ to the oscillator states. Frequency disorder adds a skew-symmetric
generator of scale $s$ per oscillator (natural-frequency). Finite settling
truncates the dynamics at a dimensionless horizon $T$.

\textbf{Results (Figure~\ref{fig:robustness}).} Task accuracy is flat
under coupling mismatch up to $s = 0.5$ on KWS, with TinyStories perplexity
rising by about $1\%$ at the extreme. Degradation under state noise is
smooth. Frequency disorder causes no loss of locking anywhere in the
measured range $s \in [0.01, 0.5]$: KWS accuracy stays flat within
$\pm 0.1$~pp, and TinyStories perplexity rises by $3.4\%$ at the extreme.
Under finite settling, the RK45-integrated task metric converges to the
analytic fixed-point value on all three tasks (Table~\ref{tab:settle}), each
residual comparing the settled metric against the analytic fixed point on the
same evaluation subset. SVA descends monotonically to the metric's resolution
by $T{=}5$, KWS is at that resolution even at the smallest horizon tested, and
on TinyStories, the one continuous metric, the residual falls to $0.14\%$ at
$T{=}30$.
RK45 agrees with the analytic fixed point at the state and task-metric
level, and forward Euler agrees except at rare large-$\norm{h_i}$ tokens,
where its fixed step becomes unstable, so these conclusions do not depend on
the choice of integration scheme.

\begin{figure}[t]
\centering
\includegraphics[width=\linewidth]{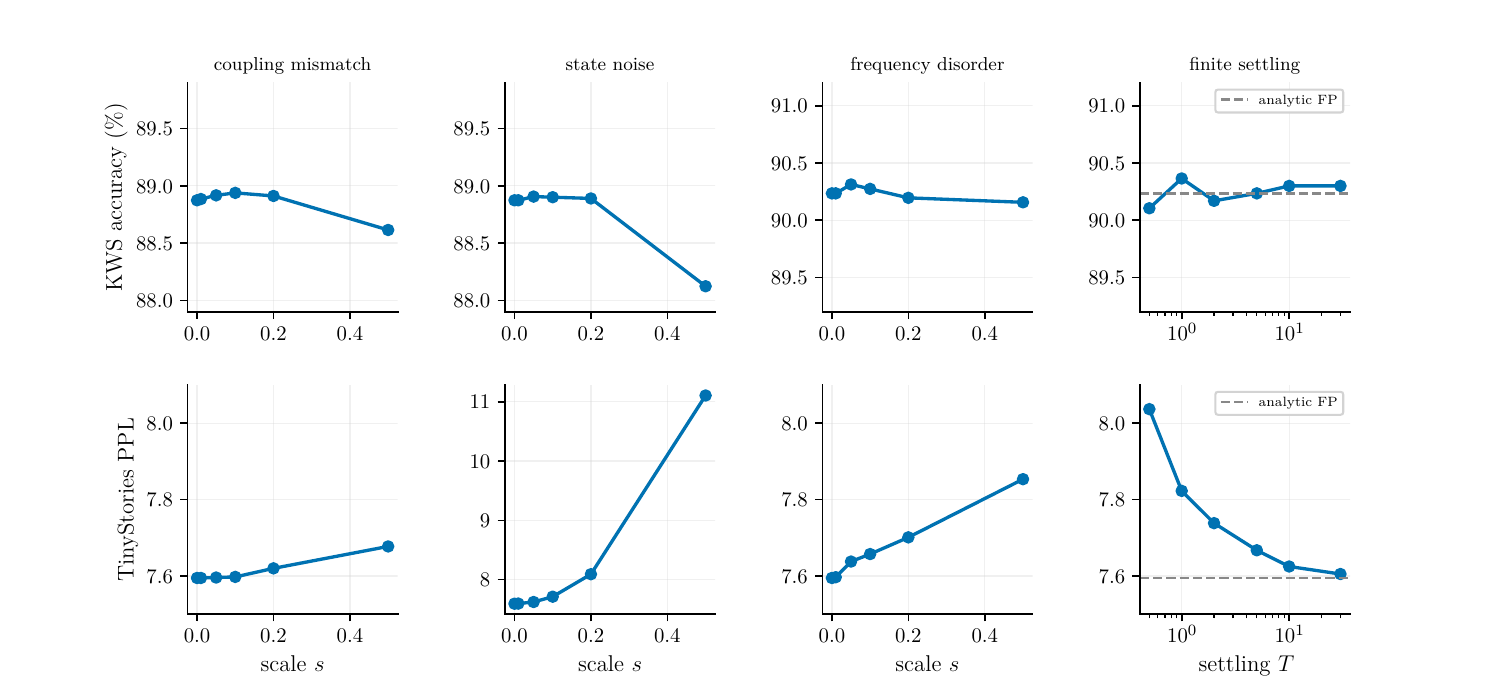}
\caption{\textbf{Robustness to physical non-idealities.} Task metric versus
perturbation scale $s$ for coupling mismatch, state noise, and frequency
disorder, and versus the settling horizon $T$, for KWS (top) and
TinyStories (bottom). Dashed lines mark the analytic fixed-point reference
on each panel's evaluation set.}
\label{fig:robustness}
\end{figure}

\begin{table}[t]
\centering
\caption{\textbf{How close the integrated dynamics come to the analytic
      fixed point.} At each horizon $T$ we integrate with RK45 and evaluate
    accuracy on KWS and SVA and perplexity on TinyStories, then report how far
    each lies from the same quantity computed with the analytic fixed point, in
    its own units: accuracy points and perplexity. Both are computed on the same
    evaluation subset. Accuracy changes in steps of one example, so a single
    example is $0.2$~pp on the
    $512$-example KWS subset and about $0.005$~pp on SVA pooled over $4000$
    sentences and five seeds; KWS entries therefore sit below what one example
    can resolve.}
\label{tab:settle}
\begin{tabular}{lccc}
\toprule
$T$ & KWS (pp) & SVA (pp) & TinyStories (PPL) \\
\midrule
$0.5$ & $-0.13$ & $-7.21$ & $+0.44$ \\
$1$   & $+0.13$ & $-2.39$ & $+0.23$ \\
$2$   & $-0.07$ & $-0.45$ & $+0.14$ \\
$5$   & $+0.00$ & $-0.00$ & $+0.07$ \\
$10$  & $+0.07$ & $-0.00$ & $+0.03$ \\
$30$  & $+0.07$ & $-0.00$ & $+0.01$ \\
\bottomrule
\end{tabular}
\end{table}

\textbf{Degenerate-set statistics.} On trained checkpoints across all three
tasks, the fraction of oscillators (token positions $\times$ heads) with
$\norm{h_i} < 0.01$ is none on SVA, $0.43\%$ (KWS; $0.025\%$ with
no-positional-encoding), and $0.11\%$ on TinyStories (at $\dosc{=}8$): the
degenerate set of Proposition~\ref{prop:degenerate} remains negligible after
training.

\textbf{Attention entropy.} Figure~\ref{fig:entropy_seeds} (left) shows the
distribution of normalized attention-row entropy on trained TinyStories
models. The oscillator readout is diffuse: median $0.90$--$0.92$ at every
$\dosc$, including $\dosc = 2$, versus $0.60$ for softmax. This measurement
supports the selectivity discussion of Section~\ref{sec:readout}.

\begin{figure}[t]
\centering
\includegraphics[width=\linewidth]{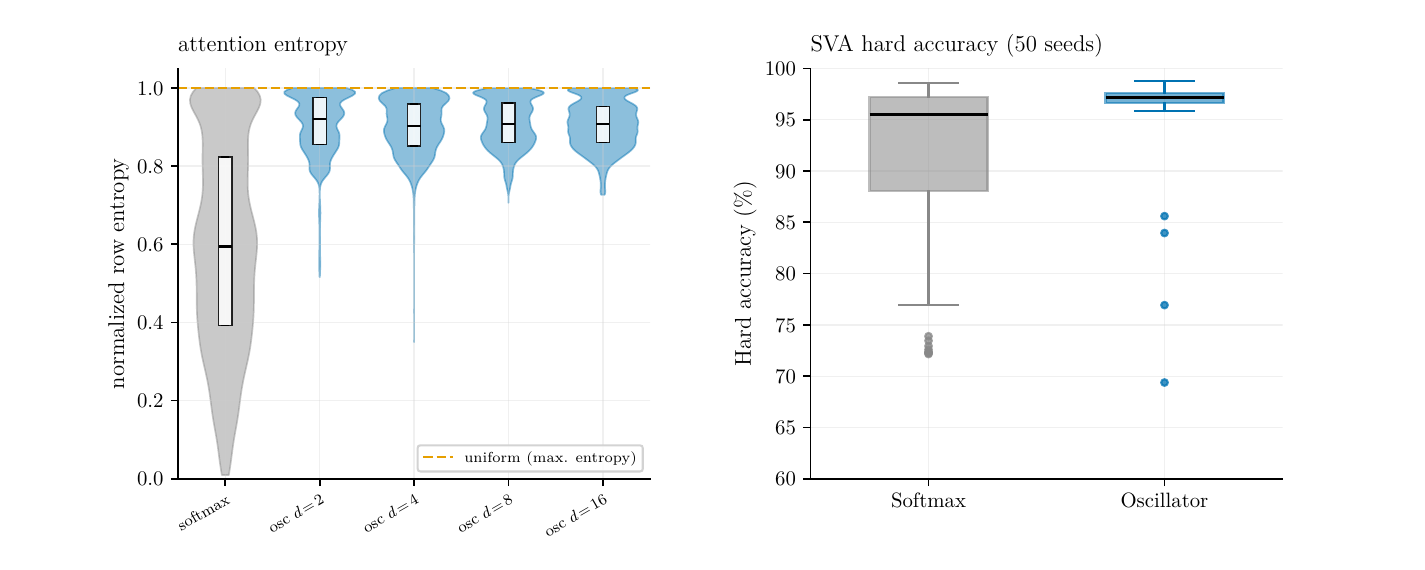}
\caption{\textbf{Attention entropy and SVA seed distribution.}
\emph{Left:} normalized attention-row entropy (uniform $= 1$) for softmax
and for the oscillator at $\dosc \in \{2, 4, 8, 16\}$; violins over raw
per-row samples with median/IQR overlays. \emph{Right:} SVA hard accuracy
across 50 seeds per mechanism; boxes show median and interquartile range,
and points beyond the whiskers are the failed seeds.}
\label{fig:entropy_seeds}
\end{figure}

\textbf{SVA seed distribution.} Figure~\ref{fig:entropy_seeds} (right)
shows the per-seed hard-accuracy distribution behind
Table~\ref{tab:sva}. Most runs of both mechanisms land in a narrow
high-accuracy band; the rest collapse partially, to between $69\%$ and $85\%$.
That happens in $3/50$ oscillator runs and $12/50$ softmax runs.

\section{Cost with softplus coupling and absolute
    counts}\label{app:cost_softplus} Table~\ref{tab:cost} assumes ReLU
  coupling. The trained models use softplus, whose exponential is counted as ten
  multiply-add equivalents like softmax's.  Table~\ref{tab:cost_softplus}
  repeats the four implementations on that assumption, and
  Table~\ref{tab:cost_absolute} gives the operation counts behind both tables.

\begin{table}[h]
\centering
\begin{tabular}{lccc}
\toprule
Reduction in attention-layer operations relative to softmax & KWS & SVA & TinyStories \\
\midrule
Softmax (reference)                            & $1.0$ & $1.0$ & $1.0$ \\
\midrule
Oscillator, all digital                        & $0.7$ & $0.7$ & $0.4$ \\
Oscillator, equilibration physical             & $0.9$ & $0.9$ & $0.6$ \\
Oscillator, equilibration and readout physical & $1.0$ & $1.0$ & $1.0$ \\
Oscillator, affine sum and division physical   & $1.2$ & $1.2$ & $1.2$ \\
\bottomrule
\end{tabular}
\caption{\textbf{Cost with softplus coupling.} Factor by which each
  implementation reduces the operation count relative to softmax. With the
  readout in the array, both mechanisms reduce to the same expression, one
  exponential per query-key pair followed by one row sum and one division, so
  the third row is exactly $1.0$. Even with the whole step physical the fourth
  row reaches only $1.2$, against $12$ with ReLU coupling in
  Table~\ref{tab:cost}.}
\label{tab:cost_softplus}
\end{table}

\begin{table}[h]
\centering
\begin{tabular}{lcccccc}
\toprule
& \multicolumn{3}{c}{ReLU coupling} & \multicolumn{3}{c}{softplus coupling} \\
\cmidrule(lr){2-4}\cmidrule(lr){5-7}
Operations in the attention layer & KWS & SVA & TS & KWS & SVA & TS \\
\midrule
Softmax                                        & $57{,}526$ & $963$ & $395{,}776$ & $57{,}526$ & $963$ & $395{,}776$ \\
\midrule
Oscillator, all digital                        & $33{,}908$ & $594$ & $635{,}136$ & $77{,}126$ & $1{,}323$ & $932{,}352$ \\
Oscillator, equilibration physical             & $23{,}912$ & $396$ & $362{,}752$ & $67{,}130$ & $1{,}125$ & $659{,}968$ \\
Oscillator, equilibration and readout physical & $14{,}308$ & $234$ & $98{,}560$  & $57{,}526$ & $963$ & $395{,}776$ \\
Oscillator, affine sum and division physical   & $4{,}802$  & $81$  & $33{,}024$  & $48{,}020$ & $810$ & $330{,}240$ \\
\bottomrule
\end{tabular}
\caption{\textbf{Absolute operation counts behind Tables~\ref{tab:cost}
    and~\ref{tab:cost_softplus},} in multiply-add equivalents, per inference and
  per attention layer. The counting convention is stated in
  Appendix~\ref{app:details}.}
\label{tab:cost_absolute}
\end{table}

\section{Measuring the dimensional bottleneck in causal language
  modeling}\label{app:alternative_ablations}
This appendix measures attention rank directly and tests the
alternative explanations.

\textbf{Measured effective rank.} Throughout we use the entropy-based
effective rank of \citet{roy2007effective}, computed as the exponential of
the entropy of the normalized singular values. It equals $r$ for a matrix
with $r$ equal singular values and is low when a few directions dominate;
it can never exceed the exact rank. Stack the settled states
$z_i^{*\top}$ into $Z^* \in \mathbb{R}^{T \times \dosc}$ and the anchors
$r_j^\top$ into $R \in \mathbb{R}^{T \times \dosc}$. The similarity matrix
before masking is $S = \mathbf{1}\mathbf{1}^\top + Z^* R^\top$, the sum of
a rank-one shift and a product of two rank-$\dosc$ factors, so
$\mathrm{rank}(S) \le \dosc + 1$. On trained models the measured effective
rank of $S$ is $2.31$, $3.95$, $5.79$, and $9.59$ for
$\dosc = 2, 4, 8, 16$ (Figure~\ref{fig:rank}, left): the trained models operate
close to, but below, their ceiling at every $\dosc$. Masking changes the
picture: it is an entrywise operation and does not preserve low rank, and the
causal mask is
invertible, so the bound no longer applies after it. The realized attention
matrices, after masking and row renormalization, have measured effective
rank around $40$.

\textbf{Rank as a resource.} Figure~\ref{fig:rank} (right) asks how costly a
rank restriction is, comparing three series evaluated on the same full
validation set. Truncating the trained softmax model's attention to rank $r$ at
inference degrades steeply as the rank is reduced, from $10.6$ at $r = 33$ to
$49.7$ at $r = 3$ (higher perplexity is worse), and stays above both trained
mechanisms at every rank. Training softmax under the restriction instead, with
the per-head query/key dimension reduced so that its score rank matches the
oscillator's ceiling, gives a clean monotone descent over
$d_h \in \{2, 4, 8, 16, 32\}$, each point a mean over five seeds:
$9.41 \to 9.18 \to 8.96 \to 8.83 \to 8.67$, the $d_h{=}32$ point being the
unmodified full-model softmax baseline. Oscillator attention trained under its
own ceiling degrades gently over the same range, from $10.91$ at $\dosc = 2$ to
$9.11$ at $\dosc = 32$. The comparison between the first series and the other
two is deliberately asymmetric: rank restriction is very costly for a model not
adapted to it, and training under the constraint recovers most of that cost.
Restricting softmax's score rank to match the oscillator's costs $0.74$ PPL
($9.41$ against the $8.67$ baseline), while the oscillator's gap at $\dosc{=}2$
is $2.24$ PPL ($10.91$ against $8.67$), so score rank accounts only for part of
the gap.

\begin{figure}[t]
\centering
\includegraphics[width=\linewidth]{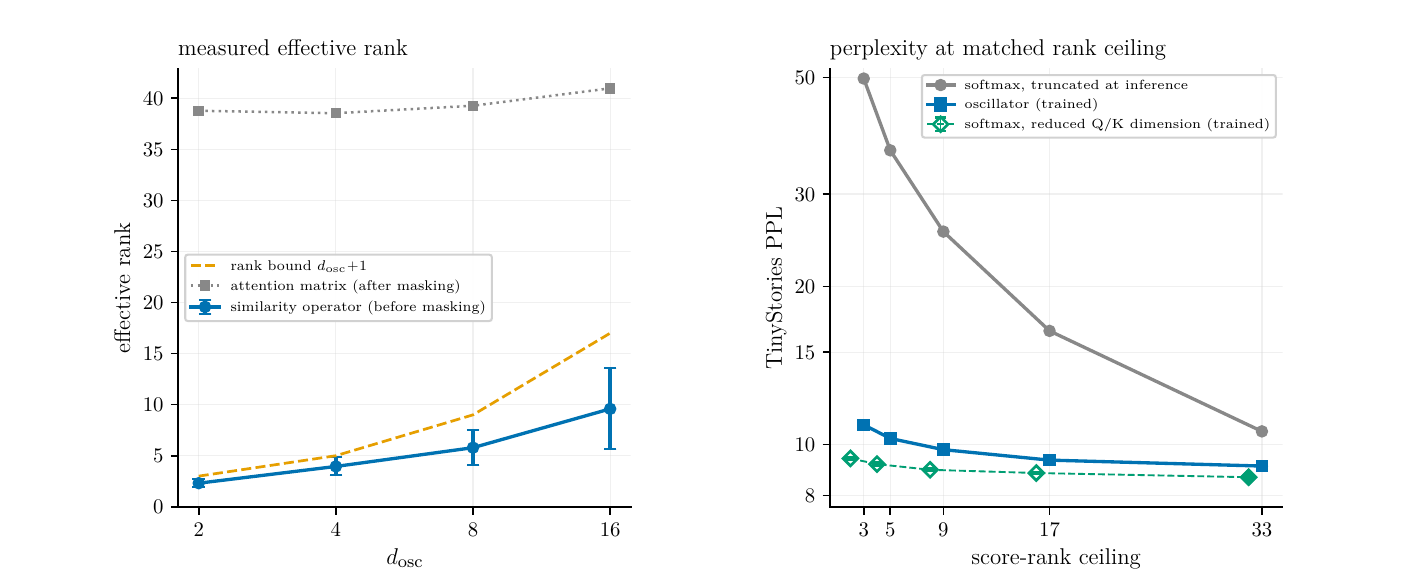}
\caption{\textbf{Measured attention rank and perplexity at matched rank
ceiling.} \emph{Left:} effective rank of the similarity operator before masking
(whiskers: $\pm 1$ std over per-sequence, per-head, per-layer samples),
against the $\dosc{+}1$ bound, and of the realized attention matrices after
masking. \emph{Right:} TinyStories perplexity at matched rank ceiling for softmax
truncated at inference, softmax trained with reduced query/key dimension,
and the oscillator trained under its ceiling; each series is plotted at its
own ceiling.}
\label{fig:rank}
\end{figure}

\textbf{Elimination of alternatives.} We tested three modifications that
  could plausibly close the oscillator--softmax perplexity gap without
  increasing $\dosc$: learned position-dependent anchor phase offsets, learned
  per-head per-position coupling amplification, and scaling the number of heads
  and $d_{\rm model}$. All use the WikiText-2 architecture
  ($d_{\rm model}{=}128$, $n_h{=}4$, $n_\ell{=}2$, $\dosc{=}2$) and the same
  optimizer and training budget as the headline runs, with mean $\pm$ std over 5
  seeds. Oscillator attention has no explicit notion of
  token distance, since the anchor $r_j$ depends only on the content of token
  $j$ and not on the offset $i{-}j$, so we added a learned rotation
  $\delta(i{-}j)$ to each anchor before computing the fixed point; this changed
  PPL by less than half a point ($-0.42\pm0.25$ against the $\dosc{=}2$
  baseline), because position already enters through the input embeddings and
  routing it through the anchor coupling adds little further
  capacity. Amplifying the couplings by a learned scalar $\beta_{h,i}$ per
  (head, query-position) pair left PPL negligibly affected ($-0.06\pm0.44$),
  even though the trained $\beta_{h,i}$ span $[0.10, 2.68]$ and so do
  vary by position: a scalar gain cannot enlarge the $\dosc$-dimensional anchor
  subspace that governs attention rank, which locates the bottleneck in the
  anchor dimension rather than in the gain. Scaling capacity moved both
  mechanisms without closing the distance between them. Starting from the
  five-seed WikiText-2 $\dosc{=}2$ configuration of Table~\ref{tab:lm}
  ($110.67 \pm 0.51$ oscillator, $99.00 \pm 0.48$ softmax), raising $n_h$ from
  $4$ to $8$ at fixed $d_{\rm model}{=}128$ degraded the oscillator to
  $111.13 \pm 0.72$ and left softmax essentially unchanged at $98.81 \pm 0.35$,
  consistent with the smaller per-head dimensionality ($d_h{=}16$ instead of
  $32$), while raising both together ($8$ heads at $d_{\rm model}{=}256$)
  improved the oscillator to $104.42 \pm 0.49$ and softmax to $92.49 \pm 0.46$,
  leaving the gap at $11.93$ PPL against $11.67$ at baseline.  Each result is
  consistent with the dimensional-bottleneck diagnosis of Section~\ref{sec:lm},
  and none closes the gap independently of $\dosc$.


\section{Code availability}\label{app:code}
Supporting code is available at
\mbox{\url{https://github.com/FabioPasqualetti/attention-by-synchronization}}.

\end{document}